\newtheorem{theorem}{Theorem}
\newtheorem{proposition}[theorem]{Proposition}
\newtheorem{lemma}[theorem]{Lemma}
\newtheorem{definition}[theorem]{Definition}
\newtheorem{remark}[theorem]{Remark}
\newcommand{\thickhline}{%
    \noalign {\ifnum 0=`}\fi \hrule height 1.3pt
    \futurelet \reserved@a \@xhline
}
\DeclareMathOperator{\rank}{rank}
\def\jie#1{\textcolor{magenta}{}}
\def\eqref#1{equation~\ref{#1}}
\def\1{\bm{1}}
\def\eps{{\epsilon}}
\def\vone{{\bm{1}}}
\def\vlambda{{\bm{\lambda}}}
\def\vv{{\bm{v}}}
\def\vw{{\bm{w}}}
\DeclareMathAlphabet{\mathsfit}{\encodingdefault}{\sfdefault}{m}{sl}
\SetMathAlphabet{\mathsfit}{bold}{\encodingdefault}{\sfdefault}{bx}{n}
\def\gL{{\mathcal{L}}}
\def\sR{{\mathbb{R}}}
\newcommand{\R}{\mathbb{R}}
\DeclareMathOperator*{\argmax}{arg\,max}
\DeclareMathOperator*{\argmin}{arg\,min}
\newcommand\norm[1]{\lVert#1\rVert}
\newcommand{\stress}{{\textrm{STRESS}}}
\newcommand{\tLambda}{{\tilde{\Lambda}}}
\newcommand{\tlambda}{{\tilde{\vlambda}}}
\newcommand{\Dlambda}{{\Delta{\vlambda}}}
\newcommand{\neucmds}{\textnormal{Neuc-MDS}}
\newcommand{\cmds}{\textnormal{cMDS}}
\DeclareMathOperator{\diag}{\mathrm{Diag}}
\DeclareMathOperator{\dis}{\mathrm{DIS}}
\newcommand{\vwbar}{\bar{\vw}}
\newcommand{\matrixparentheses}[1]{\bigl( #1 \bigr)}
\definecolor{azure}{rgb}{0.0, 0.5, 1.0}
\NewDocumentCommand{\cheng}{s m}{
    \IfBooleanTF{#1}
        {{\textcolor{azure}{#2}}} 
        {{\textcolor{azure}{Xin: #2}}} 
}
\title{Neuc-MDS: Non-Euclidean Multidimensional Scaling Through Bilinear Forms}
\author{%
  Chengyuan Deng, Jie Gao, Kevin Lu, Feng Luo, Hongbin Sun, Cheng Xin\thanks{Rutgers University.
  \texttt{\{cd751,jg1555,kll160,fluo,hongbin.sun,cx122\}@rutgers.edu} The authors acknowledge funding support through NSF IIS-2229876, DMS-2220271, DMS-2311064, CCF-2208663,  CCF-2118953, and CRCNS-2207440.} \\
}
\begin{document}

\maketitle

\begin{abstract}


We introduce \textbf{N}on-\textbf{Euc}lidean-\textbf{MDS} (Neuc-MDS), an extension of classical Multidimensional Scaling (MDS) that accommodates non-Euclidean and non-metric inputs. The main idea is to generalize the standard inner product to symmetric bilinear forms to utilize the negative eigenvalues of dissimilarity Gram matrices. Neuc-MDS efficiently optimizes the choice of (both positive and negative) eigenvalues of the dissimilarity Gram matrix to reduce {\stress}, the sum of squared pairwise error. We provide an in-depth error analysis and proofs of the optimality in minimizing lower bounds of \stress. We demonstrate Neuc-MDS's ability to address limitations of classical MDS raised by prior research, and test it on various synthetic and real-world datasets in comparison with both linear and non-linear dimension reduction methods.
\end{abstract}

\section{Introduction}
\label{sec:introduction}


Many datasets in applications adopt dissimilarities that are non-Euclidean and/or non-metric. Examples of such popular dissimilarity measures~\cite{Goshtasby2012-gi,cha07distance} include Minkowski distance ($L_p$), cosine similarity, Hamming, Jaccard, Mahalanobis, Chebyshev, and KL-divergence. Studies in psychology have long recognized that human perception of similarity is not a metric~\cite{Tversky1982-zt}. Further, dissimilarity matrices with negative entries (e.g.: cosine similarity, correlation, signed distance) have also been widely used in various problems. Negative inner product norms also have deep connections to hyperbolic spaces as well as the study of spacetime in special relativity theory.  

In many machine learning practices, embedding in low dimensional vector space is often explicitly or implicitly done within the data processing pipeline. Such embedding may already need to consider more general dissimilarities. For example, one branch of graph learning adopts embedding in non-Euclidean spaces (e.g., hyperbolic spaces~\cite{Chami2019-ie}), and machine learning for physics model and data (AI4Sicence) needs to consider more general inner product norms~\cite{He2022-gy}. 
In transformer models~\cite{NIPS2017_Attention,dosovitskiy2021an}, the attention mechanism can also be viewed as learning a general bilinear form on tokens.
Despite the wide adoption of general dissimilarity measures in practice, theoretical study of embedding and dimension reduction for non-Euclidean non-metric data appears to be still very limited.

In this paper we consider one of the most  classical algorithms for data embedding and dimension reduction --- multidimensional scaling -- and develop a non-Euclidean, non-metric version with theoretical performance guarantee.

\textbf{Background on MDS.} Classical multidimensional scaling ({\cmds}) takes as input a \emph{Euclidean distance matrix (EDM)}, i.e., a symmetric matrix $D\in \mathbb{R}^{n\times n}$ where each entry is the squared Euclidean distance between two points in Euclidean space, and recovers the Euclidean coordinates. Using a standard double centering trick, $D$ can be turned into a Gram matrix $B=X^T X$ where $X$ encodes the Euclidean coordinates.
For the purpose of producing a low-dimensional vector, classical MDS takes $k$ eigenvectors corresponding to top $k$ largest eigenvalues of the Gram matrix $B$. This minimizes the \emph{strain}, the difference (Frobenius norm) in terms of the Gram matrix.

When the input distance matrix is not a Euclidean distance matrix, this problem is called metric MDS~\cite{Beals1968-ge}. Metric MDS considers the minimization of \emph{{\stress}}~\cite{Mardia1978-dx}, defined as the sum of squared difference of pairwise embedding distances to the input dissimilarities. Minimizing {\stress} makes the problem to be non-linear and there is no closed-form solution, though one can use either gradient descent or Newton's method~\cite{Sammon1969-ce,Lee_2007-kg}. Nevertheless, in practice, {\cmds} is often applied for non-Euclidean distance matrix. 
In this case, the centered matrix $B$ is no longer positive semi-definite. The common practice is keep the top positive eigenvalues and throw away the negative eigenvalues. 

Two recent papers~\cite{Tsang2016-uc,how_MDS_wrong_NEURIPS2021} pointed out that classical MDS produces suboptimal solutions on non-Euclidean distance matrix when considering {\stress}. This is not a surprise, since {\cmds}, minimizing strain, does not minimize {\stress}. However, a more problematic issue is that when using more dimensions in cMDS (i.e., increasing $k$), the {\stress} error first drops and then increases. We call this phenomena \emph{Dimensionality Paradox}. It is theoretically unsatisfactory and counter-intuitive that embeddings by classical MDS using more dimensions could yield worse results. 

The error analysis in~\cite{how_MDS_wrong_NEURIPS2021} sheds light on this issue. When the input matrix is Non-Euclidean, the negative eigenvalues carry crucial information. cMDS, keeping only positive eigenvalues, is intrinsically biased -- the more positive eigenvalues used the more it deviates from the input data. A real eradication of this issue must address the root cause, i.e., applying an algorithm meant for Euclidean geometry on non-Euclidean data.

\textbf{Our Contributions.} We extend multidimensional scaling to non-Euclidean geometry, by generalizing the dissimilarity function from the standard inner product (which defines Euclidean geometry) to the broader family of symmetric bilinear forms $\Phi(u,v) = u^TAv$, where the symmetric matrix $A$ does not have to be positive semi-definite. For dimension reduction, we look for both a low dimensional vector representation and an associated bilinear form, that together approximate the input dissimilarity matrix with minimum {\stress} error. Specifically, the key contributions are as follows:

\begin{itemize}[leftmargin=*]
    \item We conduct an in-depth analysis on {\stress} error for any chosen subset of $k<n$ eigenvalues of the input centered dissimilarity matrix. 
    We propose Neuc-MDS, an efficient algorithm that finds the best subset of eigenvalues to minimize a lower bound of {\stress}. 

    \item Beyond the constraints of eigenvalue subsets, we extend our findings to the general linear combinations of eigenvalues.   
    Our advanced algorithm, Neuc-MDS$^+$, finds the best linear combination of eigenvalues to minimize the lower bound objective.

    \item We provide theoretical analysis for the asymptotic behavior of cMDS and Neuc-MDS on random symmetric matrices. First, both necessarily produce large {\stress} if the target dimension $k = o(n)$ -- on completely unstructured data aggressive dimension reduction shall not be expected. Further, when $k=\Theta(n)$, the {\stress} of Neuc-MDS monotonically decreases to $0$ while the {\stress} of cMDS increases and eventually reaches a plateau. 

    \item Empirically we evaluate Neuc-MDS and Neuc-MDS$^+$ on ten diverse datasets encompassing different domains. The experiment results show that both methods substantially outperform previous baselines on {\stress} and average distortion, and fully resolve the issue of \emph{dimensionality paradox} in cMDS. Our codes are available on Github\footnote{\tiny{\url{https://github.com/KLu9812/MDSPlus}}}.
\end{itemize}

\section{Related Work}

Our work is in the general family of similarity learning~\cite{Kulis2013-fu, Bellet2013-uq} with dimension reduction, going beyond metric learning and embedding. Due to the huge amount of literature on this topic we only mention those that are most relevant.

\textbf{MDS Family and Embedding in Euclidean Spaces.}
As one of the most useful embedding and dimension reduction techniques in practice, the MDS family has many variants. Non-metric MDS~\cite{Shepard1962-sk,Shepard1962-np} considers a monotonically increasing function $f$ on input dissimilarity and minimizes {\stress} between $\{f(D_{ij})\}$ and embedded squared Euclidean distances.
Generalized multidimensional scaling (GMD) considers the target space as an arbitrary smooth surface~\cite{Bronstein2006-fv}.
In addition, non-linear dimension reduction methods such as Isomap~\cite{Tenebaum00}, Laplacian Eigenmaps~\cite{Belkin2003-ut}, LLE~\cite{roweis00}, t-SNE~\cite{Hinton2002-vo,vanDerMaaten2008} consider data points from a non-linear high-dimensional manifold and extract distances defined \emph{on} the manifold. In all these methods, the points are still embedded in Euclidean spaces. Some of these methods such as Isomap directly apply cMDS as the final step. If the dissimilarity is highly non-Euclidean, we can replace cMDS by Neuc-MDS to get performance improvement.

\textbf{Dimension Reduction in Non-Euclidean Spaces.} 
There is also prior work that finds embedding of manifold data in non-Euclidean spaces, e.g., on a piece-wise connected manifold~\cite{Zhang2004-bd, Brand2002-sz}, on a sphere~\cite{Cox1991-kv,Elad2005-ez,Wilson2014-qx}, and in hyperbolic spaces~\cite{Elad2005-ez,Wilson2014-qx}. Very recently, there is study of Johnson–Lindenstrauss style dimension reduction for weighted Euclidean space~\cite{Pellizzoni2022-kx}, hyperbolic space~\cite{benjamini09hyperbolic}, as well as PCA~\cite{Chami2021-pc}, dimension-reduction~\cite{fan22nested}, and t-SNE in hyperbolic space~\cite{Guo2021-sq}. 
Our dissimilarity function generalizes beyond hyperbolic distances.

\section{Dimension Reduction with Bilinear Forms}
\label{sec:main_results}



Let $P$ denote a dataset of size $n$. 
Let $D \in \mathbb{R}^{n\times n}$ be the dissimilarity matrix of dataset $P$, $D_{ij} = D_{ji}$  is a real-valued symmetric dissimilarity measure between pair $p_i, p_j \in P$, $i \neq j \in [n]$, and all diagonal entries $D_{ii} = 0$ (i.e., $D$ is a hollow matrix). $D$ is the analog of squared Euclidean distance matrix in classical MDS.
But here $D$ is not necessarily Euclidean, may not be a metric (e.g. violating triangle inequality) and may have negative entries. Our goal is to obtain (1) a low-dimensional vector representation for each element in $P$, and (2) a function $f$ that computes a dissimilarity measure using the calculated low dimensional vectors. Since the input dissimilarities are not necessarily Euclidean nor a metric, we  look for the function $f$ beyond Euclidean distances, but stay within a broader family of inner products or bilinear forms.

A bilinear form $\Phi$ on a vector space $V$  is a function $\Phi: V \times V \to \R$ which is linear in each variable when the other variable is fixed. More precisely, $\Phi(au+v, w)=a\Phi(u,w)+\Phi(v,w)$ and $\Phi(w, au+v) =a\Phi(w, u)+\Phi(w,v)$ for all $u,v,w \in V$ and any scalar $a$.  We only consider symmetric bilinear forms $\Phi$,  i.e., $\Phi(u,v) =\Phi(v, u)$. A bilinear form is positive definite (or positive semi-definite) if $\Phi(u,u) >0$ for $\forall u \neq 0$  (or $\Phi(u, u) \geq 0$). 
Symmetric matrices and symmetric bilinear forms are two sides of a coin.  Namely, fix a basis $W=\{w_1, ..., w_n\}$ of the vector space, there is a one-to-one correspondence between them. That is, $A=[\Phi(w_i, w_j)]_{n \times n}$ is a symmetric matrix. Conversely, give a symmetric matrix $A$, one defines a symmetric bilinear form $\Phi(u,v) = u_W^TAv_W$ where $u_W$ is the coordinate of vector $u$ in the basis $W$.

Formally, we have the following problem definition.

\begin{definition}[Non-Euclidean Dimension Reduction]
\label{def:nedr}
    Given a symmetric dissimilarity matrix $D$ of a dataset $P$ of size $n$ and a natural number $k \leq n$, find a collection of $n$ $k$-dimensional vectors $\hat{P}=(\hat{p}_1, \cdots, \hat{p}_n: \hat{p}_i\in \sR^k)$ with a bilinear form $f:\sR^{k}\times \sR^{k}\rightarrow \sR$, $f(u, v)=u^TAv$, 
    such that the {\stress} error $||\hat{D}-D||^2_F$ for the dissimilarity matrix $\hat{D}$ of $\hat{P}$ given by $\hat{D}_{ij} =f(\hat{p_i}, \hat{p_j})$ 
    is minimized.
\end{definition}




\subsection{MDS in the Lens of Bilinear Forms}

A special case of a symmetric bilinear form is the standard inner product $\langle,\rangle$ on $\mathbb{R}^n$. Indeed the inner product $\langle u,v \rangle=u^T v$ is a symmetric positive definite bilinear form.  The inner product of $u-v$ and $u-v$ is precisely the  \emph{squared} Euclidean distance $||u-v||^2$.
The Euclidean space is $\mathbb{R}^n$ equipped with the standard inner product.  
Thus metric geometry of Euclidean space is governed by the inner product.  On the other hand, a basic theorem of linear algebra states that a finite dimensional vector space equipped with a symmetric \emph{positive definite} bilinear form is \it isometric \rm to the Euclidean space of the same dimension. Thus geometry of a positive definite symmetric bilinear form, or symmetric positive definite matrices, is nothing but Euclidean geometry.


When $D$ represents the inner products of pairwise differences (i.e., squared Euclidean distances) of $n$ points $P$ in $\mathbb{R}^{d}$, $D$ is called a \emph{Euclidean distance matrix (EDM)}. It can be shown that by taking $B=-\frac{1}{2}CDC$, where $C = I - \frac{1}{n} \mathbf{1}_n\mathbf{1}_n^T$ is the centering matrix and $\mathbf{1}_n$ is a vector of ones, one obtains the Gram matrix $B=X^T X$
where $X$ is $d\times n$ dimensional matrix of the $n$ coordinates of dimension $d$. The matrix $B$ is a symmetric positive semi-definite matrix. Thus using eigendecomposition of $B$ one can recover the coordinates $X$. This procedure is \emph{classical multidimensional scaling (cMDS)}
\cite{Torgerson1952-bz}.
Furthermore, if one would like to use $k$-dimension coordinates with $k<d$, classical MDS suggests to take the eigenvectors corresponding to the $k$ largest positive eigenvalues of the Gram matrix $B$. This minimizes the \emph{strain}, the difference (Frobenius norm $\|\cdot \|_F$) in terms of the Gram matrix.
\begin{equation}\label{eq:cmds_loss}
X_{cmds}\triangleq \argmin_{X \in \mathbb{R}^{n \times k}} \norm{ X^T X - (\frac{-CDC}{2})}_F^2.
\end{equation}

Now consider a general symmetric dissimilarity matrix $D$ of size $n \times n$, it naturally associates $\R^n$ with a symmetric bilinear form $\Phi$. 
In many real-world situations, the dissimilarity matrix is not positive, nor negative definite, i.e., the bilinear form is indefinite. This means the intrinsic geometry $(\R^n,  \Phi)$ is non-Euclidean. The relationship between the Gram matrix and square distance matrix still holds for indefinite bilinear forms. See~\Cref{appendix:double centering}

In practice, classical MDS is often the default choice even when the input dissimilarity matrix $D$ is not an EDM, i.e., the centered matrix $B =-\frac{1}{2} CDC$ is not positive semi-definite. 
Classical MDS, which simply drops negative eigenvalues of $B$ to produce positive semi-definiteness does not respect the geometry well.  
Indeed, multiple researchers have observed suboptimal and counter-intuitive performance~\cite{Tsang2016-uc,how_MDS_wrong_NEURIPS2021}. For example, increasing $k$ may lead to increased {\stress} error -- keeping more dimensions makes the approximation worse! On a second thought, such results are not surprising. If the input data does not carry Euclidean geometry, forcing it through a procedure for Euclidean geometry is fundamentally problematic.


Our main observation is that a vector space with an indefinite symmetric bilinear form has its own intrinsic geometry. This geometry, even though may not be metrical, carries the most accurate information about the dissimilarity matrix and the datasets. 
Suppose the centered matrix $B =-\frac{1}{2} CDC$ has $p>0$ positive eigenvalues and $q>0$ negative eigenvalues.  The associated indefinite bilinear form $\Phi$ has signature $(p,q)$.  One such example is 
\begin{equation}\label{eq:pqnorm}
\Phi(u,v)=\sum_{i=1}^pu_iv_i-\sum_{i=p+1}^{p+q} u_iv_i.
\end{equation}
The geometry of a finite dimensional vector space with a symmetric bilinear form of signature $(p, q)$ where $p, q>0$ is much less developed compared to Euclidean geometry. When $q=1$, this is the Minkowski geometry and is closely related to relativity theory in physics and hyperbolic geometry~\cite{Ratcliffe2006-ml} in mathematics. For general $(p, q)$, one should probably abandon the notion of distance for indefinite spaces. Namely, the expression $\Phi(u-v, u-v)$, even if it is positive, should not be considered as the square of the distance between two points $u,v$. According to  \cite{Ratcliffe2006-ml}, one calls $\sqrt{\Phi(u-v, u-v)}$ the Lorenzian distance between $u,v$. Despite the term distance in the name, the Lorenzian distance does not satisfy triangle inequality in general. 


\subsection{Non-Euclidean MDS}

We propose \emph{Non-Euclidean MDS}, a novel linear dimension reduction technique using bilinear forms. 
For a vector $v$, we use $\diag(v)$ as the diagonal matrix with $v$ along the main diagonal and zero everywhere else. 
For any given symmetric dissimilarity matrix $D \in \mathbb{R}^{n \times n}$, let the eigen decomposition of the centralization of $D$ be given as follows:
\begin{equation}\label[eq]{eq:eig_decomp}
    -CDC/2 = U \Lambda U^T
\end{equation}
where $U \in \mathbb{R}^{n \times n}$ is the orthogonal matrix of eigenspace and $\Lambda \in \mathbb{R}^{n \times n}$ is the diagonal matrix $\Lambda=:\diag(\vlambda)$ with eigenvalues $\vlambda \triangleq(\lambda_1, \cdots, \lambda_n)^T$, $\lambda_1 \geq \lambda_2 \geq \cdots \geq \lambda_n$.
By using an algorithm that will be discussed in 
\Cref{sec:error} we will choose $k$ of the eigenvalues, represented by a binary indicator vector $\vw=(w_1, \cdots, w_n)\in \{0,1\}^n$ with a value of $1$ (or $0$) indicating the corresponding eigenvalue is chosen (or not chosen). 
Let 
\begin{equation}\label{eq:embeddingX}
   X=\sqrt{\Lambda} \cdot \diag(\vw) \cdot U^T =:(X_1, \cdots, X_n), 
\end{equation}
Note that $\sqrt{\Lambda}$ contains complex numbers for non-PSD matrix $D$. Since $\vw$ has only $k$ non-zero values, we can drop the $n-k$ zero rows in $X$ (corresponding to the eigenvalues not selected) and have a $k$-dimensional vector representation of the data. Now we can derive dissimilarities by defining  $$\hat{D}_{i,j}\triangleq (X_i-X_j)^T(X_i-X_j),\, \hat{D}=\dis(X):=\matrixparentheses{\hat{D}_{i,j}}.$$ See \Cref{alg:non-Eu-MDS} for details.  
\jie{Changed it since the dimension of $X$ needs to be $k$ by $n$}

\begin{algorithm2e}
\caption{Non-Euclidean Multidimensional Scaling}\label{alg:non-Eu-MDS}
\KwData{$n\times n$ dissimilarity matrix $D$, integer $k\leq n$.}
\KwResult{$k\times n$ matrix $X$ of $k$-dim vectors}
$B=-\frac{1}{2} CDC$, where $C = I - \frac{1}{n} \mathbf{1}_n\mathbf{1}_n^T$ \;

Compute eigenvalue vector $\vlambda$  and eigenvectors $U$ of $B$: $\vlambda=(\lambda_1,  \lambda_2, \cdots \lambda_n)^T$ with $\lambda_1\geq \lambda_2\geq \cdots \geq\lambda_n$\;

Compute the indicator vector of $k$ selected eigenvalues $\vw = \text{EV-Selection}(\vlambda, k)$\;

Compute $X$ by $\sqrt{\Lambda} \cdot \diag(\vw)\cdot U^T$ with $n-k$ zero rows dropped\; 


\end{algorithm2e}

In the description above, we allow the coordinates in $X$ to take complex numbers, such that we can take $\hat{D}_{i,j}$ to be the standard dot product of $X_i-X_j$ with $X_i-X_j$. Alternatively, we can keep $X$ to take real coordinates, i.e., $X=\sqrt{\Lambda'} \cdot \diag(\vw) \cdot U^T$ with $\Lambda'=\diag(|\lambda_1|, \cdots, |\lambda_n|)$. Again we drop the $n-k$ zero rows in $X$ and take a bilinear form $f(u, v)=u^TAv$, where $A$ is a $k$ by $k$ diagonal matrix, with the element at $(i,i)$ to be $1$ (or $-1$) if the corresponding eigenvalue chosen is positive/negative. Notice that the bilinear form takes precisely the format of $(p, q)$-distance as in~\Cref{{eq:pqnorm}}. \jie{this paragraph is new, please check}

We have a few remarks in place: First we are not throwing away the negative eigenvalues. As will be explained later we actually keep eigenvalues of largest magnitude and some could be negative. As a consequence $\hat{D}$ may have negative real values, which is expected as we are moving away from Euclidean distances and input entries in $D$ may even start to be negative. Second, when $D$ is an EDM, i.e., all eigenvalues are non-negative, Neuc-MDS reduces to classical MDS. 
Last, similar to cMDS, our method also starts with computing the eigenvalues of the Gram matrix. For large datasets, fast (approximation) algorithm for partial SVD can also be applied to our methods. For example, on $n\times n$ symmetric matrices, using power methods one can iteratively compute partial SVD up to $k$ largest/smallest eigenvalues in $O(kn^2)$. With randomness introduced, it can be reduced to $O(\log{k}\cdot n^2)$~\cite{Golub2013-iu, Mahoney2011-kn}. For really large datasets, the dissimilarity matrix with size $O(n^2)$ might already be too large to be acquired or stored, one may extend MDS through local embedding methods like Landmark MDS~\cite{De_Silva_undated-is} or Local MDS~\cite{Chen2009-om}. This approach can also be applied to neuc-MDS to substantially speed up computation without suffering too much on performance (See \Cref{sec:experiments} for empirical results). 

\section{Theoretical Results for Non-Euclidean MDS}\label{sec:error}

To establish the foundation of theoretical analysis, we first analyze the {\stress} error of Neuc-MDS and decompose it into three terms. Next we show an efficient algorithm that minimizes the first two terms that dominate. Last, we examine Neuc-MDS and classical MDS on random Gaussian matrices.

\subsection{Error Analysis}

Inspired by the analysis of {\stress} 
of classical MDS~\cite{how_MDS_wrong_NEURIPS2021}, we adopt a similar approach and decompose the {\stress} error into three terms. 
Let $\vlambda \in \mathbb{R}^{n}$ be the vector of all eigenvalues $\vlambda =(\lambda_1, \cdots, \lambda_n)^T$, $\vlambda^{(2)} \in \mathbb{R}^{n}$ be the vector of all squared eigenvalues $\vlambda^{(2)} =(\lambda_1^2, \cdots, \lambda^2_n)^T$, $\vwbar\triangleq\vone_n-\vw$ be the indicator vector of dropped eigenvalues, $\vwbar  \in \{0,1\}^{n}$, and $\odot$ be the Hadamard product. We have the following result with proof in \Cref{appendix:proofs}.
\begin{theorem}
    \label{thm:stress-decomp} It holds that $\norm{\hat{D}-D}_F^2 = C_1+C_2+C_3$, where
    \begin{equation*}
        C_1 = 4\vwbar^T \vlambda^{(2)}, \; C_2 = 4(\vwbar^T  \vlambda)^2, \; 
        C_3 = 2n\norm{(U\odot U)(\vwbar\odot\vlambda)}_F^2-C_2/2.
    \end{equation*}
\end{theorem}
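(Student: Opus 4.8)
The plan is to realize both $D$ and $\hat D$ as ``squared-distance'' matrices of symmetric matrices and then subtract. For any symmetric $S$ write $\kappa(S)_{ij} := S_{ii} + S_{jj} - 2S_{ij}$ for the hollow matrix of induced squared distances, and note that $\kappa$ is linear in $S$. First I would record two identities. On the embedding side, since $\sqrt{\Lambda}$ is diagonal and $\vw$ is binary, the Gram matrix is $G := X^T X = U\,\diag(\vw\odot\vlambda)\,U^T$ (the complex entries of $\sqrt{\Lambda}$ cancel because $\sqrt{\lambda_i}\sqrt{\lambda_i}=\lambda_i$, so $G$ is in fact real), and expanding $(X_i-X_j)^T(X_i-X_j)$ gives $\hat D = \kappa(G)$. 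On the input side, the crucial observation is that, because $D$ is hollow, double centering is invertible in the sense that $D_{ij}=B_{ii}+B_{jj}-2B_{ij}$, i.e.\ $D=\kappa(B)$ with $B = U\Lambda U^T$ (the double-centering identity of \Cref{appendix:double centering}). Subtracting and using linearity of $\kappa$,
\[
\hat D - D = \kappa(G-B) = \kappa(M), \qquad M := G-B = U\,\diag(-\,\vwbar\odot\vlambda)\,U^T,
\]
where the last equality uses $\vlambda - \vw\odot\vlambda = \vwbar\odot\vlambda$.

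Second, I would establish the structural fact $M\vone_n = \vzero$, which is what collapses the error into exactly three terms. Since $C\vone_n=\vzero$ we have $B\vone_n = -\tfrac12 CDC\vone_n = \vzero$, so $\vone_n$ lies in the zero-eigenspace of $B$; hence its expansion $U^T\vone_n$ is supported only on indices with $\lambda_i=0$, and those indices are annihilated by $\diag(\vw\odot\vlambda)$, giving $G\vone_n=\vzero$ as well. Therefore $M\vone_n = G\vone_n - B\vone_n = \vzero$.

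Third comes the (routine) expansion. Writing $\vm:=\diag(M)$ and $m_i=M_{ii}$, we have $(\hat D-D)_{ij} = m_i+m_j-2M_{ij}$, so
\[
\norm{\hat D - D}_F^2 = \sum_{i,j}\big(m_i+m_j-2M_{ij}\big)^2 .
\]
Expanding the square yields $2n\norm{\vm}^2 + 2(\Tr M)^2 + 4\norm{M}_F^2$ plus cross terms proportional to $\vm^T M\vone_n$, which vanish by $M\vone_n=\vzero$. It then remains to identify the three survivors. As $M$ is symmetric with eigenvalues given by $-\vwbar\odot\vlambda$, we get $\norm{M}_F^2 = \sum_i \bar w_i^2\lambda_i^2 = \vwbar^T\vlambda^{(2)}$ (using $\bar w_i\in\{0,1\}$), so $4\norm{M}_F^2 = C_1$; likewise $\Tr M = -\vwbar^T\vlambda$, so $2(\Tr M)^2 = 2(\vwbar^T\vlambda)^2 = C_2/2$. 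Finally $m_i = \sum_\ell U_{i\ell}^2(-\bar w_\ell\lambda_\ell)$ gives $\vm = (U\odot U)(-\vwbar\odot\vlambda)$, whence $2n\norm{\vm}^2 = 2n\norm{(U\odot U)(\vwbar\odot\vlambda)}_F^2$. Collecting, $\norm{\hat D - D}_F^2 = 4\norm{M}_F^2 + 2(\Tr M)^2 + 2n\norm{\vm}^2 = C_1 + \tfrac{C_2}{2} + \big(C_3+\tfrac{C_2}{2}\big) = C_1+C_2+C_3$.

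I expect the main obstacle to be pinning down the two structural identities cleanly rather than the bookkeeping: the inversion $D=\kappa(B)$, which genuinely uses the hollowness of $D$ (not merely $B=-\tfrac12CDC$), and the vanishing $M\vone_n=\vzero$, which is what eliminates every cross term. Everything else is a direct expansion; the only subtlety there is verifying that the complex coordinates in $X$ still yield a real Gram matrix $G$, so that the whole manipulation stays within the reals.
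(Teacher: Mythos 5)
Your proof is correct, but it takes a genuinely different route from the paper's. The paper proves \Cref{thm:stress-decomp} as a corollary of the more general \Cref{thm:general_lower_error}, whose proof follows the machinery of Sonthalia et al.: conjugation by a Householder reflector $Q$, splitting $Q(\tilde D - D)Q$ into a principal block $\Phi$, a last column $f$, and a corner entry $\xi$, then identifying the three error terms with these three blocks using imported lemmas (Qi--Yuan for $CAC$ versus the $\Phi$-block, Hayden--Wells for the last column, plus a trace-zero argument for $\xi$). You avoid all of that: you write both $D$ and $\hat D$ as images of Gram matrices under the linear hollow-ization map $S \mapsto \bigl( S_{ii}+S_{jj}-2S_{ij} \bigr)$ (using the paper's \Cref{prop:bilinear}, part (2), which indeed requires hollowness of $D$), so that $\hat D - D$ is the hollow-ization of the single symmetric matrix $M = U\diag(-\vwbar\odot\vlambda)U^T$, and then expand $\norm{\hat D - D}_F^2$ entrywise; the identity $M\vone_n=\vzero$ (which you establish correctly, including the point that selected indices with $\lambda_i=0$ are harmless) kills exactly the cross terms $\vm^T M \vone_n$, leaving $4\norm{M}_F^2 + 2(\Tr M)^2 + 2n\norm{\vm}_F^2 = C_1 + C_2/2 + (C_3 + C_2/2)$. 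Your bookkeeping checks out, including the subtle point that $X^TX$ is real because $\sqrt{\Lambda}^T\sqrt{\Lambda}=\Lambda$ involves a transpose rather than a conjugate transpose. What each approach buys: yours is shorter, self-contained, and makes transparent where each of $C_1,C_2,C_3$ comes from (spectral norm of $M$, trace of $M$, diagonal of $M$); the paper's heavier block decomposition is set up once and then reused verbatim for the generalization in \Cref{thm:general_lower_error}, where $\tlambda$ is an arbitrary vector supported on $W$. Your method extends to that generality too (replace $-\vwbar\odot\vlambda$ by $-\Dlambda$ throughout), with the one caveat that the vanishing $G\vone_n=\vzero$ then needs the mild assumption that $\tlambda$ puts no mass on the eigendirection $\vone_n$ --- an assumption the paper's own \Cref{lm:0eigenvec} also uses implicitly.
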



Note that $C_1/4$ is the \emph{sum of squared} eigenvalues that are dropped, and $C_2/4$ is the \emph{square of the sum} of eigenvalues that are dropped. 
Individually, we can minimize $C_1$ by keeping eigenvalues of large absolute value; and $C_2$ by balancing the dropped eigenvalues such that the summation has a small magnitude. 
For term $C_3$, from Equation~\refeq{eq:error_decomp_C123} in Proof~\ref{proof:general_lower_error}, we know $C_3\geq 0$.
In~\cite{how_MDS_wrong_NEURIPS2021} it is argued that if one takes the approximation $\norm{(U\odot U)(\vwbar\odot\vlambda)}_F^2 \approx  \norm{\frac{1}{\sqrt{n}}\vone_n\vone_n^T (\vwbar\odot\vlambda) }_F^2 $ for a random orthogonal matrix $U$, then $C_3\approx 0$. Although it is empirically observed in~\cite{how_MDS_wrong_NEURIPS2021} that $C_3$ is roughly constant and hence negligible for optimization, there are some cases in our experiments in which $C_3$ is not negligible.

In light of \Cref{thm:stress-decomp}, we would like to approximately optimizing the {\stress} by minimizing the lower bound $C_1+C_2$, which can be formulated as a quadratic integer programming problem:
Given a set of $n$ values $\mathcal{L}=\{\lambda_i\}$ and a positive integer $k>0$, choose a $k$-subset $S\subseteq\mathcal{L}$ such that
\begin{equation}\label{eqn:optimization}
    \begin{aligned}
        \min_{S\subseteq \mathcal{L}, |S|=k} \sum_{\lambda\in \mathcal{L} \setminus S} \lambda^2+\bigl(\sum_{\lambda \in \mathcal{L} \setminus S} \lambda \bigr)^2.
    \end{aligned}   
\end{equation}
When $\lambda_i$'s are all positive, the best choice is to take the $k$ largest eigenvalues of $\mathcal{L}$, i.e., 
the {\cmds}' solution.
However, with a mixture of positive and negative eigenvalues, taking the top $k$ largest eigenvalues is no longer optimal --- specifically, as $k$ increases, the first error term is monotonically reduced but the second error term could start going up.  In the following subsection, we discuss an optimal algorithm to solve \Cref{eqn:optimization}. 




\subsection{An Optimal Algorithm for Eigenvalue Selection}

The optimization problem described in \Cref{eqn:optimization} is a special case of the family of quadratic integer programming problems. Though in general, quadratic integer programming is NP-hard~\cite{Pisinger2007-px}, we show that this particular one is actually solvable in polynomial time. Formally, we have:


\begin{theorem}\label{thm:selection}
    For the optimization problem defined in (\ref{eqn:optimization}), 
    there exits an optimal solution with $r$ largest positive values and $s$ smallest negative values in $\mathcal{L}$, $r+s=k$.
    And, there is an $O(n)$-algorithm that outputs an optimal solution. 
\end{theorem}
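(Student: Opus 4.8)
The plan is to recast the problem in terms of the \emph{dropped} set $\bar S := \mathcal{L}\setminus S$, which has size $m := n-k$, and to track its sum $T := \sum_{\lambda\in\bar S}\lambda$. The objective in \Cref{eqn:optimization} then reads $F(\bar S) = \sum_{\lambda\in\bar S}\lambda^2 + T^2$, and the task is to choose the $m$-subset $\bar S$ minimizing $F$. The engine of the whole argument is a single exchange computation: given a dropped value $\lambda_a\in\bar S$ and a kept value $\lambda_b\notin\bar S$, moving $\lambda_a$ into $S$ and $\lambda_b$ into $\bar S$ changes the objective by
\[
\Delta = 2(\lambda_b-\lambda_a)\,(\lambda_b + T),
\]
which one verifies by expanding the sum-of-squares term and the $T^2$ term separately. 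At any global minimizer no swap can help, so $\Delta\ge 0$ for every admissible pair $(\lambda_a,\lambda_b)$.

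From this inequality I would read off the structure directly. Setting the threshold $\tau := -T$, the condition forces every kept value with $\lambda_b > \tau$ to satisfy $\lambda_b \ge \max\bar S$, and every kept value with $\lambda_b < \tau$ to satisfy $\lambda_b \le \min\bar S$ (the case $\lambda_b=\tau$ giving $\Delta=0$). Hence no kept value can lie strictly inside the interval $(\min\bar S,\max\bar S)$, which means $\bar S$ occupies a \emph{contiguous block} of the sorted list $\lambda_1\ge\cdots\ge\lambda_n$. Equivalently, the kept set $S$ is a prefix together with a suffix, i.e.\ the $r$ largest and the $s$ smallest eigenvalues with $r+s=k$. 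To match the stated ``$r$ largest positive / $s$ smallest negative'' form, I would then add a short shifting argument: if the dropped window lay entirely among positive (resp.\ negative) eigenvalues, sliding it one step toward zero replaces a larger-magnitude value by a smaller-magnitude one, decreasing $\sum\lambda^2$ and keeping $T^2$ no larger; so at the optimum the window straddles the sign change, placing positive eigenvalues in the kept prefix and negative eigenvalues in the kept suffix.

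Given the structural result the algorithm is immediate, since the optimum is one of only $k+1$ length-$m$ contiguous windows of dropped eigenvalues. I would precompute the prefix sums $P_j=\sum_{i\le j}\lambda_i$ and $Q_j=\sum_{i\le j}\lambda_i^2$ in $O(n)$ time, using that the eigenvalues already arrive sorted from the decomposition in \Cref{eq:eig_decomp}. The cost of the window starting after index $i$ is then $F = (Q_{i+m}-Q_i) + (P_{i+m}-P_i)^2$, evaluated in $O(1)$; scanning $i=0,\dots,k$ and retaining the minimizer recovers the optimal $r,s$ in $O(n)$ total time.

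I expect the structural step, not the computation, to be the main obstacle. The two terms of $F$ pull in opposite directions—$\sum\lambda^2$ rewards keeping large-magnitude eigenvalues, while $T^2$ rewards a balanced, near-zero dropped sum—so contiguity of $\bar S$ is not obvious a priori. The exchange identity $\Delta=2(\lambda_b-\lambda_a)(\lambda_b+T)$ is precisely what reconciles them, but its asymmetry is delicate: the threshold sits at $-T$, which itself depends on the chosen set, so converting ``no profitable swap'' into genuine contiguity requires the case split above. I would also phrase the conclusion as ``there exists an optimal solution'' so that ties (where $\Delta=0$) do not obstruct transforming any optimum into a prefix-plus-suffix configuration without increasing the objective.
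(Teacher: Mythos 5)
Your proof is correct, and while its structural half runs on the same engine as the paper's, the algorithmic half takes a genuinely different route. The paper also argues by exchange, but works with the kept set $S$ and $H(S)=\sum_{\lambda\in\mathcal{L}\setminus S}\lambda$: it first uses removal-optimality to get $H(S)\ge -b$ for any kept positive $b$, then shows swapping $b$ for a larger unkept $a$ changes the objective by $(b-a)(2b+2H(S))\le 0$, which forces exactly the positive-prefix/negative-suffix structure. Your single identity $\Delta=2(\lambda_b-\lambda_a)(\lambda_b+T)$ with the threshold $\tau=-T$ packages both of those steps at once, at the cost of the extra sliding argument needed to convert contiguity of the dropped block into the stated ``$r$ largest positive, $s$ smallest negative'' form. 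The real divergence is the algorithm: the paper runs the greedy \textsc{EV-Selection} (\Cref{alg:ev-selection}), iteratively picking the largest-magnitude eigenvalue whose sign matches $H(S)$, and then needs a second, separate exchange argument to prove the greedy output ties an optimal structured solution. You instead enumerate all $k+1$ contiguous dropped windows via prefix sums, so correctness is immediate from the structure theorem and no greedy analysis is needed; both are $O(n)$ on sorted input. Your enumeration buys a shorter proof and adapts to the modified objective of \Cref{prop:second_algo} by just changing the window-cost formula, whereas the paper's greedy gives an incremental solution path as $k$ grows.

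One caveat, which you flag yourself: the tie case is a genuine (if hairline) gap in both proofs. In your threshold argument a kept value equal to $\tau$ can sit strictly inside $(\min\bar S,\max\bar S)$ with $\Delta=0$, and since $\tau$ changes after every zero-cost swap, turning ``swaps cost nothing'' into ``we reach a contiguous optimum'' needs a termination argument --- for instance, swapping the kept value $\tau$ with $\max\bar S$ strictly decreases the dropped multiset in lexicographic order, so the process halts. The paper has the mirror-image issue: when $H(S)=-b$ its displayed quantity $(b-a)(2b+2H(S))$ is zero, not strictly negative as claimed. Both technicalities are absorbed by the ``there exists an optimal solution'' phrasing, but yours is the only write-up that acknowledges the point.
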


Our algorithm (\Cref{alg:ev-selection}) is greedy and iteratively selects the eigenvalue with the highest absolute value. Specifically, let $S$ be the set of selected eigenvalues and $H(S)$ be the sum of eigenvalues not selected. Initially $S = \emptyset$. In each iteration, if $H(S) < 0$, select the negative eigenvalues remained of the greatest magnitude and add it to $S$; if $H(S)>0$, pick the largest positive one. If $H(S) = 0$, pick the eigenvalue with the greatest magnitude. The complete proof of \Cref{thm:selection} is delayed to \Cref{appendix:selection}. 



\begin{algorithm2e}
\caption{Eigenvalues Selection for Neuc-MDS}\label{alg:ev-selection}
\KwData{$\gL$: sorted set of $n$ eigenvalues, integer $k$.}
\KwResult{$k$ eigenvalues $S$}

{\bfseries Initialize} $S = \emptyset$, 
\Repeat{$|S| = k$}{
 Compute $H(S)=\sum_{\lambda \in \mathcal{L} \setminus S} \lambda$
 
Select an eigenvalue $\lambda^*$ such that $\lambda^* \cdot H(S) \geq 0$ and $\lambda^*$ has the largest absolute value.

Add $\lambda ^*$ to $S$.
}
\end{algorithm2e}


\subsection{Analysis on Random Symmetric Matrices}\label{subsec:random}

Here, we analyze the important error term $C_1+C_2$ in \Cref{thm:stress-decomp} for cMDS and Neuc-MDS when the input (centered) dissimilarity matrix is a symmetric random matrix. 
Our analysis is established upon Wigner's famous Semicircle Law~\cite{wigner55,wigner58}, which states the following.


\begin{proposition}[Semicircle Law~\cite{wigner58}]
    \label{prop:semicc-law}
    Suppose $B\in \mathbb{R}^{n \times n}$ is a symmetric random matrix where every element is independently distributed with equal densities and second moments ${\sigma}^2$.
    Let $S_{a,b}(B)$ be the number of eigenvalues of $B$ that lie in the interval $(a \sqrt{n}, b\sqrt{n}).$ Then the expected value $E(S_{a,b}(B))$ of $S_{a,b}(B)$ satisfies
    \vspace*{-2mm}
\begin{equation}\label{scl}
\lim_{n \to \infty} \frac{E(S_{a,b}(B))}{n} =\frac{1}{2\pi {\sigma}^2} \int_{a}^b \sqrt{ 4{\sigma}^2-x^2} dx.
\end{equation}
\end{proposition}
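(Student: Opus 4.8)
The plan is to prove the Proposition by the classical \emph{method of moments}, which is Wigner's original strategy. Since the statement concerns the \emph{expected} empirical spectral distribution of $B$ after rescaling, I would first normalize. Set $M_n = B/\sqrt{n}$, so that its eigenvalues are exactly the $\lambda_i/\sqrt{n}$, and let $\mu_n = \frac{1}{n}\sum_{i=1}^n \delta_{\lambda_i/\sqrt{n}}$ denote the (random) empirical spectral distribution of $M_n$. The quantity $E(S_{a,b}(B))/n$ is precisely $\E[\mu_n((a,b))]$, i.e., the mass that the expected measure $\E[\mu_n]$ places on $(a,b)$. Hence it suffices to show that $\E[\mu_n]$ converges weakly to the semicircle measure $\varrho_\sigma(x)\,dx$ with density $\varrho_\sigma(x) = \frac{1}{2\pi\sigma^2}\sqrt{4\sigma^2 - x^2}\,\mathbf{1}_{[-2\sigma,2\sigma]}(x)$, and then read off the integral over $(a,b)$. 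Throughout I would assume the entries have mean zero and finite moments of all orders; the general finite-variance case follows by a standard truncation argument that I would relegate to a remark.

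The core computation is to match moments. For each fixed $k$, the $k$-th moment of $\mu_n$ equals a normalized trace,
\[
\int x^k \, d\mu_n(x) = \frac{1}{n}\sum_{i=1}^n \Bigl(\frac{\lambda_i}{\sqrt{n}}\Bigr)^k = \frac{1}{n^{1+k/2}} \Tr(B^k),
\]
so I would compute the limit of $\E[\Tr(B^k)]/n^{1+k/2}$. Expanding the trace gives
\[
\E[\Tr(B^k)] = \sum_{i_1,\dots,i_k \in [n]} \E\bigl[B_{i_1 i_2} B_{i_2 i_3} \cdots B_{i_k i_1}\bigr],
\]
a sum over closed walks of length $k$ on the vertex set $[n]$. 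Using independence of the entries (on and above the diagonal) together with the mean-zero assumption, a summand vanishes unless every edge traversed by the walk is traversed at least twice. A counting argument then shows the leading-order contribution comes only from walks that traverse each of exactly $k/2$ distinct edges precisely twice and whose edge set forms a tree; these exist only when $k$ is even, and after normalization their number converges to the Catalan number $C_{k/2} = \frac{1}{(k/2)+1}\binom{k}{k/2}$, each weighted by $\sigma^{k}$. Walks that use some edge more than twice, or that visit fewer distinct vertices, contribute $o(n^{1+k/2})$ and vanish in the limit. This yields $\lim_n \E[\int x^k\,d\mu_n] = 0$ for odd $k$ and $= \sigma^{2p} C_p$ for $k = 2p$, which are exactly the moments of $\varrho_\sigma$.

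Finally, I would invoke moment determinacy: the semicircle law has compact support, hence it is uniquely determined by its moment sequence (which trivially satisfies Carleman's condition). Convergence of all moments of $\E[\mu_n]$ to those of a measure determined by its moments implies weak convergence $\E[\mu_n] \Rightarrow \varrho_\sigma$. Since $\varrho_\sigma$ has a continuous density, every point of $(a,b)$ is a continuity point, so $\E[\mu_n]((a,b)) \to \int_a^b \varrho_\sigma(x)\,dx$, which is precisely the claimed formula.

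The main obstacle is the combinatorial heart of the second step: organizing the closed walks by their edge-multiplicity and distinct-vertex profile and proving rigorously that (i) the doubly-traversed tree walks dominate and reproduce the Catalan numbers, and (ii) all remaining contributions are of strictly lower order in $n$. This requires a careful encoding of such walks as Dyck paths (equivalently, rooted plane trees) together with uniform bounds on the number of walks having a prescribed number of distinct vertices, and this is where essentially all of the work lies.
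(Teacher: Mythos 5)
Your proposal cannot be compared against a proof in the paper, because the paper gives none: Proposition~\ref{prop:semicc-law} is imported verbatim from Wigner's 1958 paper and used as a black box in the random-matrix analysis of Appendix~\ref{appendix:randommatrices}. Judged against the classical literature, your outline is correct and is essentially Wigner's original argument: the identification $E(S_{a,b}(B))/n = \mathbb{E}[\mu_n((a,b))]$ after the $1/\sqrt{n}$ rescaling, the trace--moment expansion $\mathbb{E}[\Tr(B^k)] = \sum \mathbb{E}[B_{i_1 i_2}\cdots B_{i_k i_1}]$ over closed walks, the reduction (via independence and vanishing first moments) to even $k$ with the doubly-traversed tree walks contributing $\sigma^{2p}C_p \, n^{1+p}$ and everything else being $o(n^{1+k/2})$, and finally Fr\'echet--Shohat moment determinacy for the compactly supported semicircle density, with $a,b$ automatically continuity points. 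Two points deserve explicit care in a write-up. First, the proposition as stated assumes only ``equal densities and second moments $\sigma^2$,'' not mean zero; your mean-zero reduction should either note that Wigner's hypothesis is a symmetry assumption on the common density (so all odd moments vanish), or handle a nonzero common mean $m$ directly by observing that subtracting $m\mathbf{1}_n\mathbf{1}_n^T$ is a rank-one perturbation, so by eigenvalue interlacing $S_{a,b}(B)$ changes by at most a bounded amount and the limit of $E(S_{a,b}(B))/n$ is unaffected. Second, the truncation step you relegate to a remark is not cosmetic: with only second moments assumed, $\mathbb{E}[\Tr(B^k)]$ need not even be finite for $k>2$, so the truncation-and-comparison argument (truncate entries at $\epsilon\sqrt{n}$, control the spectral displacement via Hoffman--Wielandt or a rank/variance bound) is a genuine ingredient of the proof at this level of generality, not an afterthought. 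With those two caveats made precise, your sketch is a complete and faithful route to the stated limit.
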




Our results are formally stated as follows with proof in \Cref{appendix:randommatrices}.

\begin{theorem}
    \label{thm:rnd-matrix-compare}
    Suppose a random symmetric matrix $B \in \mathbb{R}^{n\times n}$ where $B_{ij}$ 
    is independently distributed with equal densities and second moments ${\sigma}^2$, is taken as the centered matrix\footnote{Here we take $B$ as the centered matrix. 
    Therefore the diagonal entries of $B$ do not have to be 0. Further, the distribution of centered matrix (ignoring the scaling factor) $CAC$ is the same as the distribution of $\Phi(A)$ plus an additional zero eigenvalue. \jie{I lost track here, what is $\Phi(A)$?} Therefore, one can consider $\Phi(A)$ sampled from random matrices. When $n\to \infty$, they are asymptotically the same. See more discussion in the appendix (Lemma~\refeq{lm:vdv_qdq}).} to classical MDS and Neuc-MDS, both selecting $k$ eigenvalues with $k\leq n$. Let $e_C$ denote the $C_1+C_2$ error for cMDS and $e_{N}$ for Neuc-MDS, we have:
    \begin{enumerate}
        \vspace*{-2mm}
        \item when $k=o(n)$, $e_C \approx n^2{\sigma}^2(1+\frac{4k^2}{n}-\frac{4k}{n})$, and $e_N \approx n^2{\sigma}^2(1-\frac{4k}{n})$
        \vspace*{-2mm}
        \item when $k=cn$, with $c\rightarrow 1$, $e_N \approx 0$. When $c\geq 1/2$, $e_C \approx 0.1801\cdot n^3{\sigma}^2$.
    \end{enumerate}
\end{theorem}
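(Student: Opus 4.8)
The plan is to use Wigner's Semicircle Law (\Cref{prop:semicc-law}) to replace sums over eigenvalues by integrals against the limiting spectral density. Writing each eigenvalue as $\lambda = x\sqrt{n}$, the rescaled values $x$ have limiting density $\rho(x) = \frac{1}{2\pi\sigma^2}\sqrt{4\sigma^2 - x^2}$ on $[-2\sigma, 2\sigma]$, symmetric about $0$. Since $C_1 = 4\vwbar^T\vlambda^{(2)} = 4\sum_{\lambda\text{ dropped}}\lambda^2$ and $C_2 = 4(\vwbar^T\vlambda)^2 = 4(\sum_{\lambda\text{ dropped}}\lambda)^2$, both are determined once we know which eigenvalues are dropped. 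Using $\lambda^2 = nx^2$ and $\lambda = \sqrt{n}\,x$, for a dropped set occupying a rescaled region $R \subseteq [-2\sigma,2\sigma]$ we get the leading-order estimates
\[
C_1 \approx 4n^2\!\int_R x^2\rho(x)\,dx, \qquad C_2 \approx 4n^3\Bigl(\int_R x\rho(x)\,dx\Bigr)^2 .
\]
Two integrals drive everything: $\int_{-2\sigma}^{2\sigma}x^2\rho(x)\,dx = \sigma^2$ and $\int_0^{2\sigma}x\rho(x)\,dx = \tfrac{4\sigma}{3\pi}$. The crucial asymmetry in scaling is that $C_1 = \Theta(n^2)$ always, whereas $C_2 = \Theta(n^3)$ exactly when the dropped region $R$ has nonzero mean $\int_R x\rho\,dx \neq 0$, and is much smaller when $R$ is balanced about $0$.

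Next I would pin down $R$ for each method. Classical MDS keeps the $k$ largest eigenvalues, and since it discards all negatives it can retain at most the $\approx n/2$ positive ones; hence for $k=cn$ with $c\ge 1/2$ its dropped region is frozen at the negative half $R_C=[-2\sigma,0]$, while for $k=o(n)$ it is $R_C=[-2\sigma,a]$ with $a$ set by $\int_a^{2\sigma}\rho = k/n\to 0$, so $a\to 2\sigma$. Neuc-MDS keeps the $k$ eigenvalues of largest magnitude, so its dropped region is the central band $R_N=[-\delta,\delta]$ with $\int_{-\delta}^{\delta}\rho = 1-k/n$. The key structural difference is that $R_N$ is symmetric about $0$, forcing $\int_{R_N}x\rho\,dx = 0$ and killing the $\Theta(n^3)$ term, whereas $R_C$ is one-sided and generically has nonzero mean.

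With these regions fixed, the four claims follow by evaluating the two integrals. For $k=o(n)$ the kept eigenvalues concentrate at the edges $x\approx\pm 2\sigma$; for cMDS, $\sum_{\text{kept}}\lambda^2\approx k(2\sigma\sqrt{n})^2=4kn\sigma^2$ and $\sum_{\text{kept}}\lambda\approx 2k\sigma\sqrt{n}$, giving $C_1/4\approx n^2\sigma^2-4kn\sigma^2$ and $C_2/4\approx 4k^2\sigma^2 n$, hence $e_C\propto n^2\sigma^2(1-4k/n+4k^2/n)$; for Neuc-MDS the kept set splits evenly between the two edges so $\sum_{\text{kept}}\lambda\approx 0$, killing $C_2$ and leaving $e_N\propto n^2\sigma^2(1-4k/n)$. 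For $k=cn$ with $c\to 1$, the band $R_N$ shrinks to $\{0\}$, so both $\int_{R_N}x^2\rho$ and $\int_{R_N}x\rho$ vanish and $e_N\to 0$. Finally, for cMDS with $c\ge 1/2$ the dropped region is $R_C=[-2\sigma,0]$; here $C_1=\Theta(n^2)$ is negligible against $C_2=\Theta(n^3)$, and $\int_{-2\sigma}^{0}x\rho\,dx = -\tfrac{4\sigma}{3\pi}$ yields $C_2\propto n^3\bigl(\tfrac{4\sigma}{3\pi}\bigr)^2 = \tfrac{16}{9\pi^2}n^3\sigma^2\approx 0.1801\,n^3\sigma^2$, constant in $c$ --- the plateau. (The overall prefactor depends on the chosen normalization of $e_C,e_N$; the stated coefficients correspond to tracking the leading $\rho$-integrals.)

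The main obstacle is the rigorous passage from the Semicircle Law --- an expected eigenvalue-counting statement --- to the asymptotics of $\sum_{\text{dropped}}\lambda^2$ and especially $C_2 = 4(\sum_{\text{dropped}}\lambda)^2$. Because $\sum_{\text{dropped}}\lambda$ is of order $n^{3/2}$, its square is of order $n^3$ and is very sensitive to the mean of the dropped block, so one must confirm that the deterministic semicircle contribution dominates the random fluctuations. For cMDS this is safe, as the mean is $\Theta(n^{3/2})$; but for Neuc-MDS the leading term cancels by symmetry, so I would need to show the residual $\sum_{\text{dropped}}\lambda$ (from fluctuations and finite-sample asymmetry) is $o(n^{3/2})$, making its square genuinely lower order. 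A secondary point is replacing the discrete top-$k$ sum by an edge integral when $k=o(n)$, which needs uniform control of eigenvalues near the spectral edge; the Tracy--Widom edge fluctuations enter only at lower order and can be absorbed into the approximation.
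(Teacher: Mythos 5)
Your proposal is correct and follows essentially the same route as the paper: both invoke Wigner's semicircle law to convert the dropped-eigenvalue sums $C_1$ and $C_2$ into integrals against $\rho(x)=\frac{1}{2\pi\sigma^2}\sqrt{4\sigma^2-x^2}$, exploit that Neuc-MDS drops a symmetric central band (killing the $\Theta(n^3)$ term $C_2$) while cMDS drops a one-sided region with mean $-\frac{4\sigma}{3\pi}$ (producing the plateau $\frac{16}{9\pi^2}n^3\sigma^2\approx 0.1801\,n^3\sigma^2$), and recover the $k=o(n)$ asymptotics. The only cosmetic difference is that for $k=o(n)$ you use an edge-concentration shortcut (all kept eigenvalues $\approx\pm 2\sigma\sqrt{n}$) where the paper Taylor-expands its exact closed-form integral expressions in the threshold parameter $r$; both yield the same leading-order formulas.
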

\vspace*{-2mm}


By \Cref{thm:rnd-matrix-compare}, Neuc-MDS has a strictly better $C_1+C_2$ error than cMDS. Second, if we take $|S|=k\ll n=|\mathcal{L}|$, the term $C_1 = \sum_{\lambda \in \mathcal{L}\setminus S}\lambda^2$ is almost equal to $\sum_{\lambda \in \mathcal{L}}\lambda^2\approx n^2{\sigma}^2$.
Therefore the {\stress} error of both classical MDS and Neuc-MDS cannot be very small if the target dimension $k$ is $o(n)$. 
Lastly, when $k=cn$, with $c\rightarrow 1$, $e_N$ is monotonically decreasing and eventually reaches 0. On the other hand, for cMDS, when $c\geq 1/2$, $e_C$ reaches a plateau at about $0.1801\cdot n^3{\sigma}^2$. Notice that this error has an extra factor of $n$ compared to the error for small $k$.

For real world data the Gram matrix is likely far from a random matrix. The analysis above points out that aggressive dimension reduction can be indeed only a luxury for structured data, even if we use inner products that are not limited to Euclidean distances. Second, any real world data carries some random measurement noise. When the scale of such random noise becomes non-negligible, STRESS error introduced by such noise cannot be small with aggressive dimension reduction. We would recommend practitioners to examine the spectrum of eigenvalues to gain insights on the power or limit in reducing dimensions.
\section{Beyond Binary Selection of Eigenvalues}\label{sec:lower_error_bound}
Both cMDS and {\neucmds} choose a subset of $k$ eigenvalues from the input Gram matrix. 
This can be considered as
applying $\diag(\vw)$ to the eigenvalue vector $\Lambda$ (Equation~\ref{eq:embeddingX}) 
to filter some eigenvalues out. Such operators can be viewed as a special case of more general low-rank linear maps $T:\sR^n \to \sR^n$ with $\rank(T)=k\leq n$.

Here we consider a new family of embeddings $\tilde{X}=U\tLambda^{1/2}$ with $\tLambda$ being a rank-$k$ diagonal matrix whose diagonal entries are given by some $\tlambda\in \sR^n$ with only $k$ nonzero entries. Note that there always exists a rank-$k$ linear operator $T$ such that $\tlambda=T(\vlambda)$. Then we ask if we can improve {\neucmds} further with a rank-$k$ linear map of $\vlambda$?
This question can be answered by the following theorem:

\begin{restatable}[]{theorem}{gle}\label{thm:general_lower_error}
Given a dissimilarity matrix  $D\in \sR^{n\times n}$ with eigenvalues $\vlambda\in\sR^n$ of its Gram matrix $-CDC/2=U\Lambda U^T$, for any rank-$k$ $(k\leq n)$ diagonal matrix $\tLambda=\diag(\tlambda)$ with $\tlambda\in \sR^n$ with $k$ nonzero entries only on coordinates given by some index $k$-set $W\subseteq [n]$, let $\vw:=\vone_W\in\{0,1\}^n$ be the indicator vector of $W$, and let $\tilde{D}$ be the dissimilarity matrix reconstructed from $\tilde{X}=\tLambda^{1/2}U^T$. Then the {\stress} error of $\tilde{D}$ can be expressed as:
    $\norm{\tilde{D}-D}_F^2 = \tilde{C_1} + \tilde{C_2} + \tilde{C_3}$
with
\[
\tilde{C_1}:=4\left[\vwbar^T\vlambda^{(2)}+\vw^T\Dlambda^{(2)}\right],\, \tilde{C_2}:=4\left[\vwbar^T\vlambda + \vw^T\Dlambda\right]^2,\,
    \tilde{C_3}:=2n\norm{(U\odot U)(\Delta\vlambda)}_F^2-\frac{\tilde{C}_2}{2},
\]
where $\Dlambda:=\vlambda - \tlambda$.
The first two terms, $\tilde{C_1}+\tilde{C_2}$, as a lower bound of the {\stress}, is minimized as\begin{equation}
4\vwbar^T\vlambda^{(2)}+\frac{4(\vwbar^T\vlambda)^2}{1+k}\mbox{ with }\tlambda\mbox{ to be }
    \tlambda^*:=\vlambda\odot \vw+ \frac{\vwbar^T\vlambda}{1+k}\vw.
\end{equation} 
\end{restatable}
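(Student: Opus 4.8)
The plan is to prove the statement in two stages: first establish the three-term decomposition as an exact identity, then minimize the lower bound $\tilde{C_1}+\tilde{C_2}$ by an unconstrained convex optimization over the free entries of $\tlambda$. For the decomposition I would set $\tilde{B}:=\tilde{X}^T\tilde{X}=U\tLambda U^T$ and $E:=B-\tilde{B}=U\diag(\Dlambda)U^T$, where $B=-CDC/2=U\Lambda U^T$. Using the same double-centering identity invoked in the proof of \Cref{thm:stress-decomp}, namely $D_{ij}=B_{ii}+B_{jj}-2B_{ij}$ for a hollow symmetric $D$, together with $\tilde{D}_{ij}=\tilde{B}_{ii}+\tilde{B}_{jj}-2\tilde{B}_{ij}$ by construction of $\dis$, subtraction gives $(\tilde{D}-D)_{ij}=-(E_{ii}+E_{jj}-2E_{ij})$. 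I would then expand $\norm{\tilde{D}-D}_F^2=\sum_{ij}(E_{ii}+E_{jj}-2E_{ij})^2$ and collect terms into $2n\sum_i E_{ii}^2+2(\Tr E)^2+4\norm{E}_F^2$, after noting that the cross terms equal $-8\sum_i E_{ii}(E\vone)_i$ and vanish because $\vone/\sqrt{n}$ is the eigenvector of $B$ for the (unselected) zero eigenvalue, so $E\vone=0$.

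The three surviving pieces translate directly into eigenvalue quantities: $E_{ii}=[(U\odot U)\Dlambda]_i$, $\Tr E=\vone^T\Dlambda$ (columns of $U$ are unit vectors), and $\norm{E}_F^2=\vone^T\Dlambda^{(2)}$ (the eigenvalues of $E$ are the entries of $\Dlambda$). Splitting each sum along $W$ and its complement and using the fact that $\Delta\lambda_m=\lambda_m$ is forced on the dropped coordinates recovers $\tilde{C_1}=4\norm{E}_F^2$, $\tilde{C_2}/2=2(\Tr E)^2$, and $2n\sum_i E_{ii}^2=\tilde{C_3}+\tilde{C_2}/2$, so the total is exactly $\tilde{C_1}+\tilde{C_2}+\tilde{C_3}$. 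Nonnegativity of $\tilde{C_3}$ then follows from Cauchy--Schwarz, $(\sum_i E_{ii})^2\leq n\sum_i E_{ii}^2$, confirming that $\tilde{C_1}+\tilde{C_2}$ is a genuine lower bound on the \stress.

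For the minimization, since $\Delta\lambda_m=\lambda_m$ is fixed on the dropped coordinates, the only free variables are $\delta_m:=\Delta\lambda_m$ for $m\in W$. Writing $A:=\vwbar^T\vlambda$ and $P:=\vwbar^T\vlambda^{(2)}$, the objective becomes $4[P+\sum_{m\in W}\delta_m^2]+4[A+\sum_{m\in W}\delta_m]^2$, a strictly convex quadratic. Setting the gradient to zero yields $\delta_j+A+\sum_{m\in W}\delta_m=0$ for every $j\in W$, which forces all $\delta_m$ equal to $-A/(1+k)$; back-substitution gives $\tlambda^*=\vlambda\odot\vw+\frac{\vwbar^T\vlambda}{1+k}\vw$ and optimal value $4\vwbar^T\vlambda^{(2)}+\frac{4(\vwbar^T\vlambda)^2}{1+k}$, as claimed.

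I expect the main obstacle to be the bookkeeping in the decomposition step, specifically justifying that the $(E\vone)$ cross terms drop out, which hinges on the null direction $\vone$ not being among the selected eigenvectors, and carefully allocating the $\tilde{C_2}/2$ between the trace-squared term and $\tilde{C_3}$ so that the stated coefficients match. By contrast, the minimization is a routine convex calculation, and the lower-bound interpretation is immediate once $\tilde{C_3}\geq 0$ is shown.
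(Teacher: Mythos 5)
Your proof is correct, and the decomposition step takes a genuinely different route from the paper's. The paper conjugates $\tilde{D}-D$ by a Householder reflector $Q$ and identifies $\tilde{C}_1,\tilde{C}_2,\tilde{C}_3$ with the squared norms of the three blocks $\Phi(\cdot)$, $\xi(\cdot)$, $f(\cdot)$ of $Q(\tilde{D}-D)Q$, relying on two imported lemmas (Lemma~\ref{lm:vdv_qdq} and Lemma~\ref{lm:Q_lastcol}) to express those blocks through $C(\tilde{D}-D)C$ and the diagonal of $E:=U\diag(\Dlambda)U^T$. You instead expand $\norm{\tilde{D}-D}_F^2=\sum_{i,j}(E_{ii}+E_{jj}-2E_{ij})^2$ entrywise and collect $4\norm{E}_F^2+2(\Tr E)^2+2n\sum_i E_{ii}^2$, which is more elementary and self-contained; your identifications of these three quantities with $\tilde{C}_1$, $\tilde{C}_2/2$, and $\tilde{C}_3+\tilde{C}_2/2$ are all correct. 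Both arguments rest on the same unstated hypothesis: your cross term $-8\sum_i E_{ii}(E\vone_n)_i$ vanishes exactly when $\tilde{B}\vone_n=0$ for $\tilde{B}=U\tLambda U^T$, i.e., when the zero-eigenvalue coordinate whose eigenvector is $\vone_n/\sqrt{n}$ lies outside $W$; the paper buries this same requirement in Lemma~\ref{lm:0eigenvec} (whose one-line proof simply asserts it), while you flag it explicitly, which is a point in your favor. The trade-off is that in the paper $\tilde{C}_3$ appears as $2\norm{f(\tilde{D})-f(D)}_F^2$, so its nonnegativity (hence the lower-bound interpretation of $\tilde{C}_1+\tilde{C}_2$) is automatic, whereas your route needs the extra Cauchy--Schwarz step $(\Tr E)^2\leq n\sum_i E_{ii}^2$ --- which you supply. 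The minimization step is essentially the paper's: the paper applies Cauchy--Schwarz and then minimizes a univariate quadratic in $Z=\vw^T\Dlambda$, while you set the gradient of the same strictly convex quadratic to zero; both yield the same $\tlambda^*$ and the same optimal value $4\vwbar^T\vlambda^{(2)}+4(\vwbar^T\vlambda)^2/(1+k)$.
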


\begin{remark}
    Recall the decomposition of STRESS in \Cref{thm:stress-decomp}, the lower bound $4\vwbar^T\vlambda^{(2)}+4(\vwbar^T\vlambda)^2=C_1 + C_2/(k+1)\leq C_1 + C_2$. Therefore, it has a better lower bound of {\stress} compared to {\neucmds}.
\end{remark}

Theorem~\ref{thm:general_lower_error} provides a constructive way to obtain the optimal $\tlambda$ that (approximately) minimizes the {\stress} error for a prefixed $\vw$ which is determined by an indicator set $W$ served as the constraints of nonzero entries on $\tlambda$. The next question is how to find an optimal $k$-set $W$ on which the optimal $\tlambda^*$ has the lowest lower bound. Essentially, we need to solve the following optimization problem:
\begin{equation}\label[eq]{eq:opt_problem2_main}
    \min_{|W|=k} \left[\sum_{i\notin W}\lambda_i^2 + \frac{1}{1+k}(\sum_{i\notin W} \lambda_i )^2  \right]
\end{equation}
\begin{proposition}\label{prop:second_algo}
    For the optimization problem (\ref{eq:opt_problem2_main}),
    there exits an optimal solution with $r$ largest positive values and $s$ smallest negative values in $\mathcal{L}$, $r+s=k$.
    And, there is an $O(n)$-algorithm that outputs the optimal solution.     
\end{proposition}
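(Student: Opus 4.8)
The plan is to recognize that the objective in \Cref{eq:opt_problem2_main} is exactly the objective of \Cref{eqn:optimization} behind \Cref{thm:selection}, except that the squared-sum term $\bigl(\sum_{i\notin W}\lambda_i\bigr)^2$ is now multiplied by the fixed positive constant $\gamma := 1/(1+k)$. Writing $m := n-k$ for the number of dropped eigenvalues and $D := [n]\setminus W$ for the dropped index set, we must minimize $g(D) := \sum_{i\in D}\lambda_i^2 + \gamma\bigl(\sum_{i\in D}\lambda_i\bigr)^2$ over all $D$ with $|D|=m$. Since $k$ is fixed throughout the optimization, $\gamma$ is a constant, so the entire argument proving \Cref{thm:selection} should transfer once I confirm that each step relies only on $\gamma>0$ rather than on the specific value $\gamma=1$.

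First I would establish the structural claim: there is an optimal dropped set $D^\star$ whose complement consists of the $r$ largest positive eigenvalues together with the $s$ most negative eigenvalues, for some $r+s=k$. Since the eigenvalues are sorted as $\lambda_1\geq\cdots\geq\lambda_n$, this says $D^\star$ is a contiguous block of length $m$ straddling $0$. I would prove it by an exchange argument over same-sign pairs: swapping a dropped $a\in D$ with a kept $b\notin D$ changes the objective by $\Delta = (\lambda_b-\lambda_a)\bigl[(\lambda_a+\lambda_b)+\gamma(h+h')\bigr]$, where $h,h'$ are the dropped-sums before and after the swap. For two positives with $\lambda_a>\lambda_b$ (or two negatives symmetrically), the sign of the bracket, and hence whether the swap is improving, is controlled using the sign of the running dropped-sum $H$ — precisely the quantity $H(S)$ tracked by \Cref{alg:ev-selection}. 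This reduces the feasible set to the $k+1$ sign-splits $(r,s)$ with $r+s=k$, equivalently the $k+1$ contiguous windows of size $m$.

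Given the structure, the algorithmic part is routine. Using that the eigenvalues are already sorted on input, I would precompute prefix sums of $\{\lambda_i\}$ and $\{\lambda_i^2\}$ in $O(n)$ time, evaluate $g$ for each of the $O(k)=O(n)$ candidate windows in $O(1)$ each by incrementally sliding the boundary $r$ from $0$ to $k$, and return the minimizer; this yields the optimal $W$, from which $\tlambda^\star$ of \Cref{thm:general_lower_error} is read off. Alternatively, the greedy rule of \Cref{alg:ev-selection} (pick the largest-magnitude remaining eigenvalue whose sign agrees with $H(S)$) builds the optimal kept set directly in $O(n)$, and I would show it coincides with the window-minimizer.

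I expect the main obstacle to be the exchange/greedy optimality proof under the reweighting. Unlike the sum-of-squares term, which decouples across indices, the squared-sum term couples all dropped eigenvalues through the shared quantity $h=\sum_{i\in D}\lambda_i$, so a single same-sign swap is not unconditionally improving; the bracket $(\lambda_a+\lambda_b)+\gamma(h+h')$ can change sign when the dropped mass is large and of the opposite sign. The argument must therefore be organized around the sign of $H$, as in the greedy, to guarantee that each admissible step decreases both terms simultaneously. Verifying that this $H$-driven exchange still certifies \emph{global} optimality for every $\gamma>0$, and in particular for $\gamma=1/(1+k)$, is the delicate step; once it is in place, the remainder follows the template of \Cref{thm:selection} essentially verbatim.
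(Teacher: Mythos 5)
Your reduction to a fixed weight $\gamma = 1/(1+k)$ is sound (every feasible $W$ has $|W|=k$, so the weight is constant over the feasible set), your swap formula is correct, and \emph{given} the structural claim, enumerating the $k+1$ sign-splits via prefix sums is a valid $O(n)$ algorithm --- a legitimate alternative to the paper's greedy (\Cref{alg:ev-selection-plus}). But the proposal does not actually prove the structural claim, which is the substance of \Cref{prop:second_algo}: you explicitly leave the ``delicate step'' open, and the mechanism you hint at --- $H$-driven same-sign swaps that ``decrease both terms simultaneously'' --- does not work as stated, since (as you note yourself) a single same-sign swap need not be improving when the dropped mass has the opposite sign. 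The missing argument needs \emph{two} exchanges played against each other. Concretely: if an optimal kept set contains a positive $b$ while a larger positive $a$ is dropped, then optimality under the swap $a \leftrightarrow b$ gives $(b-a)\left[(1-\gamma)a+(1+\gamma)b+2\gamma h\right]\ge 0$, where $h$ is the dropped sum; hence $(1-\gamma)a+(1+\gamma)b+2\gamma h\le 0$, hence $h<0$, so some negative $c$ is dropped; and since $(1-\gamma)c<(1-\gamma)a$, the swap $c \leftrightarrow b$ changes the objective by $(b-c)\left[(1+\gamma)b+(1-\gamma)c+2\gamma h\right]<0$, a strict improvement, contradicting optimality. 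This is essentially the paper's own argument (run there with the size-dependent weight $1/(|S|+1)$ and an extra insertion step to reach size $k$); in your constant-$\gamma$ formulation it is in fact cleaner, because $\gamma<1$ makes the contradiction strict and removes the neutral-swap case analysis needed when $\gamma=1$. Without this (or an equivalent) argument, the proposition is not proved.

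Separately, your fallback claim that the greedy of \Cref{alg:ev-selection} already ``builds the optimal kept set'' and coincides with the window minimizer is false. Take $\mathcal{L}=\{5,\,-2.9,\,-2.9\}$ and $k=1$, so $\gamma=1/2$. \Cref{alg:ev-selection} sees $H(\emptyset)=-0.8<0$ and keeps $-2.9$, giving objective $25+8.41+\tfrac{1}{2}(2.1)^2=35.615$, whereas keeping $5$ gives $2(8.41)+\tfrac{1}{2}(5.8)^2=33.64<35.615$. (For $\gamma=1$, keeping $-2.9$ \emph{is} optimal, so the two objectives have genuinely different optima and no generic transfer between them is possible.) This is precisely why the paper introduces \Cref{alg:ev-selection-plus}, which at each step compares the actual marginal objectives $A_1,A_2$ of adding the largest positive versus the most negative remaining eigenvalue, rather than reusing the sign-of-$H$ rule.
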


The algorithm follows a similar greedy manner as {\sc{EV-Selection}}, with only one adjustment: in each step, compare the two marginal gains provided by the largest positive eigenvalue and the lowest negative eigenvalue among the remained ones, and choose the positive/negative eigenvalue of largest magnitude if the corresponding gain is smaller. \jie{someone please double check this sentence above} We name the algorithm Neuc-MDS$^+$, and present the complete algorithm with its correctness proof in~\Cref{appendix:selection}.


\section{Experiments}
\label{sec:experiments}

This section presents experimental results of Neuc-MDS and Neuc-MDS${^+}$. First, we evaluate the performance on dissimilarity error of two proposed algorithms comparing with closely-related baselines on three metrics: \stress, distortion and additive error. Then we show that the \emph{dimensionality paradox} issue observed on cMDS is fully resolved by Neuc-MDS and Neuc-MDS${^+}$. 

\paragraph{Synthetic Data.} We introduce two synthetic datasets: \emph{Random-simplex} and \emph{Euclidean-ball}, both with non-Euclidean dissimilarities. See details in \Cref{subsec:syn-data-gen}. On a high level, suppose the dataset has size $n$, we construct a \emph{Random-simplex} such that for each vertex, the first $n-1$ coordinates virtually form a simplex, while the last coordinate almost dominates the distances between the other points, which creates a large negative eigenvalue for the Gram matrix. The \emph{Euclidean-ball} dataset (similar to Delft’s balls~\cite{Duin2010-wh}) considers $n$ balls of different radii with  the distance of two balls defined as the smallest distance of two points on the two respective balls. The dissimilarities by this definition no longer satisfy triangle inequality. 


\smallskip\noindent\textbf{Real-world Data.} We consider both genomics and image data. For genomics data, we include 5 datasets from the Curated Microarray Database (CuMiDa)~\cite{feltes2019cumida}, each indicating a certain type of cancer. Following the practice mentioned in~\cite{van2023snekhorn}, pairwise dissimilarities are generated with entropic affinity with the diagonal as zero. We also test three celebrated image datasets: MNIST, Fashion-MNIST and CIFAR-10. The dissimilarity matrix for each dataset captures 1000 images randomly sampled for each class. We use three measures in~\cite{how_MDS_wrong_NEURIPS2021} to calculate the dissimilarities.

An essential guidance of our choice of datasets is the existence of substantial negative eigenvalues. Otherwise, Neuc-MDS and cMDS are equivalent. 
\Cref{fig:eigen-dist-renal} illustrates the eigenvalue distribution of the Renal dataset.  \Cref{tab:dataset} shows the basic statistics of each dataset and the number of negative eigenvalues of the Gram matrix. All datasets are non-Euclidean and non-metric.

\vspace{-2em}
\begin{figure*}[h!]
\begin{minipage}{0.48\linewidth}
    \centering
    \vspace{20pt}
    \captionof{figure}{Negative (red), positive (blue).} 
    \includegraphics[height=3.2cm]{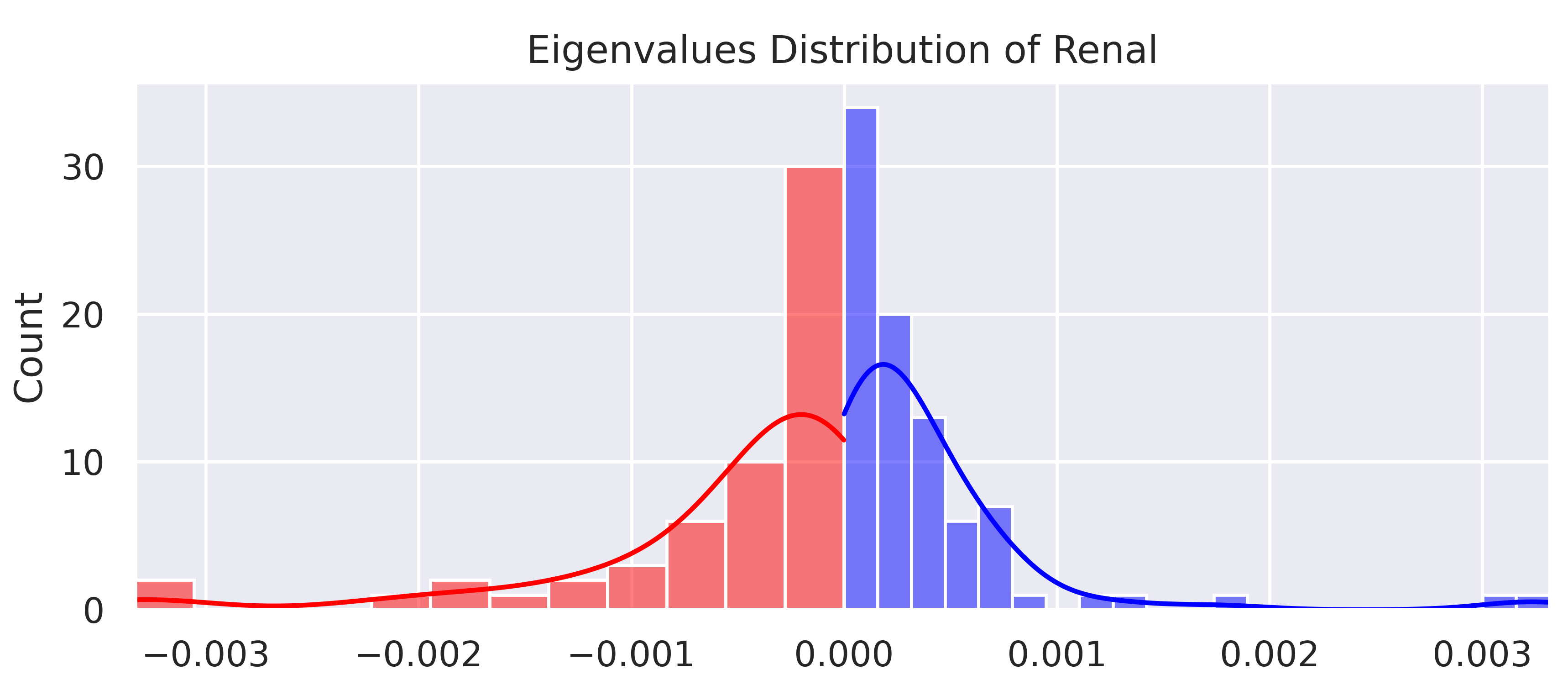}
    \label{fig:eigen-dist-renal}
\end{minipage} 
\hspace{10pt}
\begin{minipage}{0.51\linewidth}
    \tiny
    \centering
    \captionof{table}{ Datasets used in experiments.}
    \label{tab:dataset}
    \begin{tabular}{c|cccc} 
    \thickhline
      Dataset &  Size & \# $\{\lambda < 0\}$ & Classes & Metric \\ 
    \hline
     Simplex & 1000 & 900 &  N.A. &  \ding{55}\\
     Ball & 1000 & 887 & N.A. & \ding{55}\\
     \hline 
    Brain & 130 & 53 &  5 &  \ding{55}\\
     Breast & 151 & 59 & 6 & \ding{55}\\
     Colorectal & 194 & 78 & 2 & \ding{55}\\
      Leukemia & 281 & 117 & 7 & \ding{55}\\
      Renal & 143 & 57 & 2 & \ding{55} \\
     \hline
     MNIST & 1000 & 454 & 10 & \ding{51}\\
      Fashion & 1000 & 429 & 10 & \ding{51}\\
      CIFAR-10 & 1000 & 399 & 10 & \ding{51}\\
    \thickhline
    
    \end{tabular}
   
\end{minipage}%
\vspace{-1.5em}
\end{figure*}


\smallskip\noindent\textbf{Baselines.} We include cMDS, Lower-MDS~\cite{how_MDS_wrong_NEURIPS2021} and SMACOF (Scaling by MAjorizing a COmplicated Function)~\cite{scikit-learn} as baselines. 
Lower-MDS~\cite{how_MDS_wrong_NEURIPS2021} looks for a symmetric, low-rank, trace-zero, and positive semi-definite matrix. SMACOF minimizes STRESS using majorization and is one of the best nonlinear optimization algorithms for MDS. All methods are deterministic therefore variance is not concerned.

\begin{table*}[ht]
\footnotesize
\centering
\caption{Evaluation Results on {\stress}.  }
\label{tab:stress-all}
\begin{tabular}{cccccc} 
\thickhline 
  Dataset & cMDS & Lower-MDS & Neuc-MDS  & Neuc-MDS${^+}$ & SMACOF \\
  Random-Simplex & 80.520 & 31.542 & 1.179 & \textbf{0.194} & 15.962 \\
  Euclidean Ball & 36.975 & 17.303 & \textbf{1.196} & 1.351 & 4e6 \\
  \hline
  Brain \tiny{(50161)} & 2.894 & 0.289 & 0.046 & \textbf{0.045} &  0.081 \\
  Breast \tiny{(45827)} & 2.822 &  0.423 & \textbf{0.029}  & \textbf{0.029} & 0.078 \\
  Colorectal \tiny{(44076)} & 1.464 & 0.221 & \textbf{0.017} & 0.026 & 0.036\\
  Leukemia \tiny{(28497)}& 2.958 & 0.624 & 0.078 & 0.096 & \textbf{0.005}\\
  Renal \tiny{(53757)}& 0.490 & 0.090 & 0.026 & 0.036 & \textbf{0.017}\\
  \hline
  MNIST & 65.107 & 37.896 & 9.935 & \textbf{9.885} & 2.35e5 \\
  Fashion-MNIST & 35.235 & 1.955 & 0.613 & \textbf{0.612} & 2.80e5 \\
  CIFAR10 & 26.598 & 1.276 & 0.858 & \textbf{0.850} & 1.63e5\\
\thickhline
\end{tabular}
\end{table*}

\begin{table*}[ht]
\footnotesize
\centering
\caption{Evaluation Results on Average Geometric Distortion. }
\label{tab:distortion-all}
\begin{tabular}{ccccc} 
\thickhline 
  Dataset &  cMDS & Lower-MDS & Neuc-MDS &  Neuc-MDS${^+}$ \\
  Random-Simplex & 1.049 & 1.049 & 1.010 & \textbf{1.004}  \\
  Euclidean Ball &  1.046 & 1.041 & \textbf{1.013} & 1.017 \\
  \hline
  Brain \tiny{(50161)} &  8.160 & 42.705 & \textbf{5.809} & 6.941\\
  Breast \tiny{(45827)} &  6.988 & 31.081 & \textbf{6.205} & 6.295\\
  Colorectal \tiny{(44076)} &  23.938 & 34.587 & \textbf{20.234} & 22.475\\
  Leukemia \tiny{(28497)}&  \textbf{6.551} & 32.214 & 7.032 & 6.749\\
  Renal \tiny{(53757)}&  21.709 & 38.282 & \textbf{19.680} & 21.223\\
  \hline
  MNIST &  1.119 & 1.104 & 1.064 & \textbf{1.063}\\
  Fashion-MNIST &  1.135 & 1.096 & \textbf{1.068} & \textbf{1.068}\\
  CIFAR10 &  1.129 & \textbf{1.109} & 1.121 & 1.118\\
\thickhline
\end{tabular}
\end{table*}

\textbf{Performance on Dissimilarity Error.}
In addition to {\stress} as the primary metric, we also test the average distortion (i.e. multiplicative error) and scaled additive error on all datasets. For all metrics, a smaller value indicates a more favorable performance. Limited by space, we leave scaled additive error together with other observations in \Cref{appendix:experiment}. With $k$ as the target dimension, for synthetic datasets and images, we set $k=100$, for genomics data, $k=20$. 
The details are presented in \Cref{tab:stress-all}.

The results\footnote{For the sake of clarity we scaled some results by a factor of $10^3$. We did so on all methods to ensure we did not engender bias on analysis. We provide the original results in \Cref{subsec:more-dissim}.} show that in terms of \stress, Neuc-MDS and Neuc-MDS$^+$ outperform cMDS and Lower-MDS consistently by a large margin. SMACOF has comparable performance with Neuc-MDS in a couple data sets but can go out of bound in others. For average distortion, the genomics datasets differentiate different methods drastically while the rest datasets produce comparable results. Neuc-MDS$^+$ occasionally gives slightly higher STRESS than Neuc-MDS.
Recall that both methods focus on optimizing for a lower bound of {\stress} (in \Cref{thm:stress-decomp} and \Cref{thm:general_lower_error}). This shows that $C_3$ may play a role in practice.

\textbf{Neuc-MDS Addresses `Dimensionality Paradox'.}
Dimensionality paradox of classical MDS refers to the observation that $\stress$ increases as the dimension goes up. When raising this concern, ~\cite{how_MDS_wrong_NEURIPS2021} proposes a Lower-MDS algorithm as mitigation. We show that Neuc-MDS and Neuc-MDS$^{+}$ address this issue even better. For Lower-MDS  the target dimension cannot be larger than the number of positive eigenvalues. Our methods do not have this limitation. \Cref{fig:dim-paradox} shows {\stress} on Random-simplex and Renal. In Random-simplex, cMDS has an increasing STRESS with $k=75\sim 100$ then stops, and in Renal the STRESS keeps increasing. In contrast, Lower-MDS converges promptly while Neuc-MDS and Neuc-MDS$^+$ achieve even lower STRESS. 
Results on other datasets are in \Cref{subsec:more-paradox}.

\vspace{-1em}
\begin{figure}[h]
    \centering
    \includegraphics[width=1\textwidth]{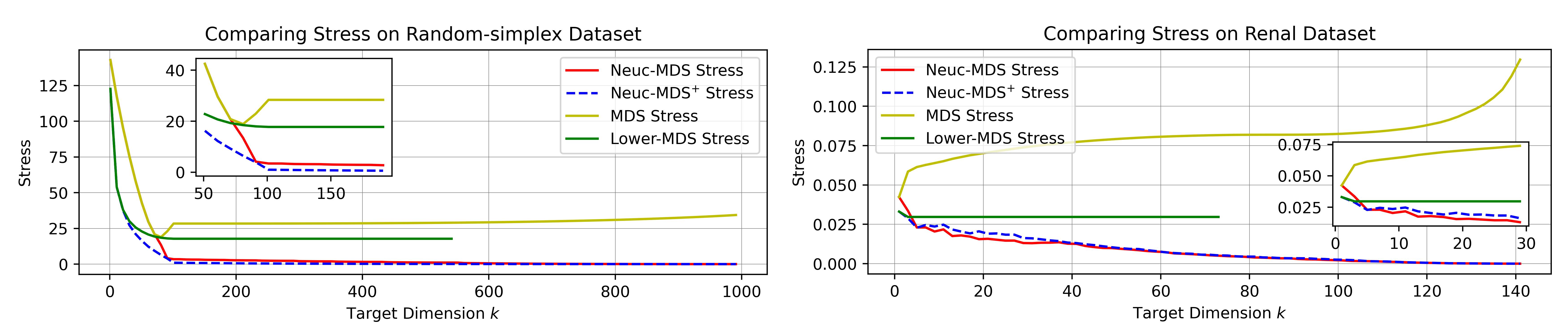}
    \vspace{-1.3em}
    \caption{\footnotesize Neuc-MDS and Neuc-MDS$^+$ consistently produce lower STRESS on all dimensions. Lower-MDS has a shorter curve because the target dimension $k$ is limited to be smaller than the number of positive eigenvalues.}
    \label{fig:dim-paradox}
\end{figure}

\textbf{Landmark MDS}
Landmark MDS~\cite{De_Silva_undated-is} is a heuristic to speed up classical MDS. One chooses a small number of landmarks and apply MDS on the landmarks first. The coordinates of the remaining points are obtained through a triangulation step with respect to the landmarks. We can use the same heuristic to speed up Neuc-MDS. On the random-simplex dataset, with only 25\% points randomly chosen as landmarks the {\stress} is only a factor of $1.0644$ 
 of the {\stress} obtained by Neuc-MDS. If we use only 10\% points as landmarks, the final {\stress} is only $1.0898$
 of the {\stress} of Neuc-MDS. This shows that Neuc-MDS can also be significantly accelerated using the landmark idea, achieving nearly the same {\stress}.

\section{Discussion and Conclusion}
\label{sec:conclusion}
This paper presents an extension of classical MDS to non-Euclidean non-metric settings. 
We would like to mention a few future directions. 
Since we step out of the domain of Euclidean embedding, both the input dissimilarity matrix and the one obtained after dimension reduction can have negative values.
Therefore if one would like to feed the output dissimilarity matrix to another data processing module that by default requires non-negative values, special care must be taken to address the negative values. 
In experiments, we discovered that Neuc-MDS$^{+}$ achieves similar stress as Neuc-MDS and produces much fewer negatives values in the output dissimilarity matrix (\Cref{subsec:more-dissim}). 
How to effectively use such dissimilarities in downstream learning and inference tasks would be a major future work.
Note that the geometry of general bilinear forms is a largely unexplored territory.



\clearpage

\appendix

\section{Hollow Matrices and Relationship Between Gram and Squared Distance Matrices}\label{appendix:double centering}

We will prove the following proposition relating hollow symmetric matrices and squared ``distance'' matrices. 

\begin{proposition} \label{prop:bilinear}
(1) Given any symmetric matrix $G=[g_{ij}]_{n \times n}$, there exists a symmetric bilinear form $\langle,\rangle $ on $\R^n$ and $n$ vectors $v_1, ..., v_n$ such that $g_{ij} =\langle v_i, v_j \rangle$.

(2) Given any hollow symmetric matrix $M=[m_{ij}]_{n \times n}$, there exists a symmetric bilinear form $\langle, \rangle$ on $\R^n$ and $n$ vectors $v_1, ..., v_n$ such that $m_{ij} =\langle v_i-v_j, v_i-v_j \rangle$ for all $i,j$.
\end{proposition}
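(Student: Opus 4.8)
The plan is to handle the two parts separately, with part (2) reducing to part (1) through the double-centering construction already used in the main text. For part (1), the statement is essentially a restatement of the matrix/bilinear-form correspondence discussed earlier: I would take the vectors $v_i = e_i$ to be the standard basis of $\R^n$ and define the symmetric bilinear form $\langle u, w\rangle := u^T G w$. Symmetry of $G$ makes $\langle,\rangle$ a symmetric bilinear form, and $\langle e_i, e_j\rangle = e_i^T G e_j = g_{ij}$, as required. No positive-definiteness is invoked, which is exactly the advantage of working with general bilinear forms rather than inner products.

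For part (2), I would set $B := -\tfrac12 C M C$ with $C = I - \tfrac1n \mathbf{1}_n \mathbf{1}_n^T$, which is symmetric because $M$ is. Applying part (1) to $B$ produces a symmetric bilinear form $\langle,\rangle$ and vectors $v_1, \ldots, v_n$ with $\langle v_i, v_j\rangle = B_{ij}$. Expanding bilinearly gives $\langle v_i - v_j, v_i - v_j\rangle = B_{ii} - 2 B_{ij} + B_{jj}$, so the whole claim reduces to the double-centering identity $B_{ii} + B_{jj} - 2B_{ij} = m_{ij}$ for all $i,j$.

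The computational heart of the argument, and the step I expect to require the most care, is this last identity. Writing $r_i := \tfrac1n \sum_k m_{ik}$ for the row means and $g := \tfrac1{n^2}\sum_{k,l} m_{kl}$ for the grand mean, a direct expansion of $C M C$ yields $B_{ij} = -\tfrac12\bigl(m_{ij} - r_i - r_j + g\bigr)$, where symmetry of $M$ is used to equate row and column means. Substituting this into $B_{ii} + B_{jj} - 2B_{ij}$, the diagonal terms simplify to $B_{ii} = r_i - \tfrac12 g$ precisely because $m_{ii} = 0$ by hollowness, and then the row-mean and grand-mean contributions cancel in pairs, leaving exactly $m_{ij}$.

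Finally I would note that hollowness is not a mere convenience but is in fact forced: the diagonal case demands $\langle v_i - v_i, v_i - v_i\rangle = 0 = m_{ii}$, so the construction can only represent matrices with zero diagonal. Combining the verified identity with the bilinear expansion completes part (2), and together with part (1) establishes the proposition.
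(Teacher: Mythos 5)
Your proposal is correct and follows essentially the same route as the paper: part (1) via the standard basis and the form $\langle u,w\rangle = u^T G w$, and part (2) by double centering and reducing to the identity $B_{ii}+B_{jj}-2B_{ij}=m_{ij}$, with hollowness used exactly where the diagonal terms must vanish. The one substantive difference is normalization, and yours is the consistent choice: you take the Gram matrix $B=-\tfrac12 CMC$ directly, whereas the paper's write-up sets $\langle v_i,v_j\rangle = -2b_{ij}$ with $b_{ij}$ the entries of $CMC$, which makes its claimed equivalence ($-2m_{ij}=b_{ii}+b_{jj}-2b_{ij}$) and its expansion of $\langle v_i-v_j,v_i-v_j\rangle$ disagree by a factor of $4$; rescaling to $\langle v_i,v_j\rangle=-\tfrac12 b_{ij}$, as you effectively do, is precisely the fix.
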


Note that the hollow condition $m_{ii}=0$ is necessary.

\begin{proof} Part (1) is a well known fact from linear algebra. We take $v_1, ..., v_n$ to be the standard basis of $\R^n$.  From the symmetric matrix $G$, one defines the symmetric bilinear form by the formula $\langle  u,v \rangle  =u^TGv$ and vice versa.

To see part (2), consider the centered symmetric matrix $B=CMC$ where $C =I-\frac{J}{n}$ is the double centering matrix and $J$ is the matrix whose entries are $1$. One can see that its $(i,j)$-th entry $b_{ij}$ of matrix $B$ is given by
\begin{equation}\label{sum1} b_{ij} =m_{ij} -\frac{1}{n}\sum_{k=1}^n (m_{ik}+m_{kj}) +\frac{1}{n^2}\sum_{ k=1}^n\sum_{l=1}^n m_{kl}.\end{equation}

One can easily verify from the formula that the centering condition holds: 
\begin{equation}\label{sum0}\sum_{k=1}^n b_{ik} =0. \end{equation}

By part (1) of the proposition, one finds $n$ vectors $v_1, ..., v_n$ in $\R^n$ and a symmetric bilinear form $\langle,\rangle $ on $\R^n$ such that $-2b_{ij} =\langle v_i, v_j\rangle $. 
Now we claim that $m_{ij} =\langle v_i-v_j, v_i-v_j\rangle $, i.e., 
$-2 m_{ij} = b_{ii}+b_{jj}-2b_{ij}.$

Indeed, by dropping the constant term $\frac{1}{n^2}\sum_{ k=1}^n\sum_{l=1}^n m_{kl}$
in (\ref{sum1}), we have
\begin{align}
& b_{ii}+b_{jj}-2b_{ij} \\
=& m_{ii} -\frac{2}{n}\sum_{k=1}^n m_{ik}  + m_{jj} -\frac{2}{n}\sum_{k=1}^n m_{kj}  -2m_{ij}
+\frac{2}{n}\sum_{k=1}^n( m_{ik}+m_{kj})\\
=&-2m_{ij}.
\end{align}
The last step uses the fact that $M$ is a hollow matrix, i.e.,  $m_{ii}=m_{jj}=0$.
\end{proof}

In summary, we call $G=[\langle v_i, v_j \rangle ]$ the \it Gram  matrix \rm and $M=[\langle v_i-v_j, v_i-v_j \rangle ]$ and \it the squared distance matrix \rm of a symmetric bilinear form $\langle ,\rangle $.  Obviously, we can determine 
$M$ from $G$ by the formula $\langle v_i-v_j, v_i-v_j \rangle  =\langle v_i, v_i \rangle  + \langle v_j, v_j \rangle  -2\langle v_i, v_j \rangle $, i.e., $m_{ij} = g_{ii}+g_{jj}-2g_{ij}$. 
We can also recover the Gram matrix $G$ from $M$ from the double centering construction $CMC$ if the center of the vectors $v_1, ...,v_n$ is  $0$, i.e., $\sum_{i=1}^n v_i=0$. To see this, given a squared distance matrix $M$ associated to $n$ vectors $v_1,.., v_n$, we can replace $v_i$'s by $v_i-w$ for a fixed vector $w$ without changing $M$. Now by taking $w=\frac{1}{n} \sum_{i=1}^n v_i$ to be the center of $v_1, ..., v_n$, we can normalize $\sum_{i=1}^n v_i=0$.  In this case, formula (\ref{sum1}) states that

\begin{equation}\label{sum2} \langle v_i, v_j \rangle =-\frac{m_{ij}}{2} +\frac{1}{2n}\sum_{k=1}^n (m_{ik}+m_{kj}) -\frac{1}{2n^2}\sum_{ k=1}^n\sum_{l=1}^n m_{kl}, \end{equation}
where $m_{ij} =\langle v_i-v_j, v_i-v_j \rangle $.

\section{Proofs of Error Analysis of Non-Euclidean MDS}\label{appendix:proofs}

In this section, we want to prove our main Theorem~\ref{thm:general_lower_error} for the error analysis of {\stress} error. Note that Theorem~\ref{thm:stress-decomp} is a special case of Theorem~\ref{thm:general_lower_error} for $\Dlambda\odot \vw=0$, which is exactly the case in MDS and {\neucmds} with $\tilde{\lambda}$ given by $\vlambda\odot \vw$. 

\gle* 

The proof will essentially follow the pipeline in~\cite{how_MDS_wrong_NEURIPS2021}. Our results and proofs can be viewed as an extension to a more general family of problems. 
We first show that the {\stress} error can be decomposed into three terms $\tilde{C_1}+\tilde{C_2}+\tilde{C_3}$. Then we analyze the approximate lower bound $\tilde{C}_1+\tilde{C}_2$ and figure out the minimum. 

We first introduce some notations and lemmas we need to use later. 
Recall that $C=I-\frac{1}{n}\vone_n\vone_n^T$ is the centering matrix. First note the following fact:
\begin{lemma}\label[lm]{lm:0eigenvec}
    The $n$-dimensional vector $\vone_n$ is an eigenvector with eigenvalue $0$ of the matrices $C, U$ and $\tilde{X}^T\tilde{X}$. 
\end{lemma}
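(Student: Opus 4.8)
The plan is to verify the three assertions in sequence, each one propagating the single fundamental fact that the centering matrix kills $\vone_n$. First I would record this base fact by direct computation: since $C = I - \frac{1}{n}\vone_n\vone_n^T$ and $\vone_n^T\vone_n = n$, we get $C\vone_n = \vone_n - \frac{1}{n}\vone_n(\vone_n^T\vone_n) = \vone_n - \vone_n = \vzero$, so $\vone_n$ is an eigenvector of $C$ with eigenvalue $0$.

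Next I would push this through the Gram matrix $B := -CDC/2 = U\Lambda U^T$. Because $B\vone_n = -\frac{1}{2}CD(C\vone_n) = \vzero$, the vector $\vone_n$ lies in $\ker B$, i.e. it is an eigenvector of $B$ with eigenvalue $0$. As $B = U\Lambda U^T$ is an orthogonal eigendecomposition, this is exactly the sense in which the claim "eigenvalue $0$ of $U$" should be read: we may designate the normalized $\vone_n/\sqrt{n}$ to be the column $u_{j_0}$ of $U$ whose diagonal entry $\lambda_{j_0}$ of $\Lambda$ is $0$, so that $U^T\vone_n = \sqrt{n}\,\ve_{j_0}$. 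I would state this interpretation explicitly, since $U$ being orthogonal cannot literally have eigenvalue $0$; the honest content is that $\vone_n$ is the zero-eigenvalue direction recorded in the decomposition.

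Finally, for $\tilde X^T\tilde X$ I would use $\tilde X = \tLambda^{1/2}U^T$ to write $\tilde X^T\tilde X = U\tLambda U^T = \sum_i \tilde\lambda_i\, u_i u_i^T$, where $u_i$ is the $i$-th column of $U$. Applying this to $\vone_n$ and using orthonormality of the $u_i$ (so that $u_i^T\vone_n = \sqrt{n}\,\delta_{i,j_0}$) collapses the sum to $\tilde X^T\tilde X\,\vone_n = \tilde\lambda_{j_0}\vone_n$. It then remains to argue $\tilde\lambda_{j_0}=0$: the selected support $W$ of $\tlambda$ picks the $k$ eigenvalues of largest magnitude, and the centering-induced eigenvalue $\lambda_{j_0}=0$ has minimal magnitude, so $j_0\notin W$ (whenever $k$ is below the number of nonzero eigenvalues) and hence $\tilde\lambda_{j_0}=0$ by the rank-$k$ support hypothesis on $\tlambda$; thus $\tilde X^T\tilde X\,\vone_n = \vzero$.

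The main obstacle is not any one computation — all three reduce to a line of linear algebra — but rather this last bookkeeping point, namely confirming that the zero eigenvalue manufactured by double centering is never assigned a nonzero weight, together with cleanly stating the charitable reading of the "$U$" claim. I would close by flagging the conceptual payoff: the three facts together say the reconstructed embedding stays centered ($\tilde X^T\tilde X\,\vone_n=\vzero$), which is precisely the hypothesis needed to invoke the double-centering correspondence of \Cref{appendix:double centering} when relating $\norm{\tilde D - D}_F^2$ to the eigenvalue perturbation $\Dlambda$ in the proof of \Cref{thm:general_lower_error}.
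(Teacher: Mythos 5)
Your proof is correct and takes essentially the same route as the paper's, which is a two-line sketch ("check $C\vone_n=0$; the rest follows by definitions of $U$ and $\tilde X^T\tilde X$") that you simply carry out in full detail. If anything you are more careful than the paper: you rightly observe that the orthogonal matrix $U$ cannot literally have eigenvalue $0$ and give the intended reading (take $\vone_n/\sqrt n$ as the column of $U$ paired with the zero entry of $\Lambda$), and you make explicit the bookkeeping fact the paper silently assumes, namely that the centering-induced zero eigenvalue's coordinate $j_0$ receives zero weight $\tilde\lambda_{j_0}=0$, without which one only gets $\tilde X^T\tilde X\,\vone_n=\tilde\lambda_{j_0}\vone_n$ rather than $\vzero$.
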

\begin{proof}
    By definition of $C=I-\frac{1}{n}\vone_n\vone_n^T$, it is easy to check $C\vone_n=0$. Also by definitions of $U$ and $\tilde{X}^T\tilde{X}$ we can get similar results.  
\end{proof}
As a corollary, we have the following lemma:
\begin{lemma}\label[lemma]{lm:vdv_xx}
    ${-C\tilde{D}C}/{2} = C\tilde{X}^T\tilde{X}C=\tilde{X}^T\tilde{X}$.
\end{lemma}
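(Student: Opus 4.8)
The plan is to reduce both equalities to two ingredients already at hand: the elementary double-centering identity relating a Gram matrix to its associated squared-distance matrix, and the centering property recorded in \Cref{lm:0eigenvec}. First I would set $G := \tilde{X}^T\tilde{X}$ and note that $G = U\tLambda U^T$, since $\tLambda^{1/2}$ is diagonal and hence $(\tLambda^{1/2})^T\tLambda^{1/2}=\tLambda$. Directly from the reconstruction rule $\tilde{D}_{ij}=(\tilde{X}_i-\tilde{X}_j)^T(\tilde{X}_i-\tilde{X}_j)$ and expansion of the square, one gets $\tilde{D}_{ij}=G_{ii}+G_{jj}-2G_{ij}$, which in matrix form reads $\tilde{D}=g\vone_n^T+\vone_n g^T-2G$, where $g$ is the vector of diagonal entries of $G$.

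For the middle equality I would conjugate $\tilde{D}$ by $C$. Because $C$ is symmetric and $C\vone_n=\vzero$ (\Cref{lm:0eigenvec}), the two rank-one terms are annihilated: $C(g\vone_n^T)C=(Cg)(\vone_n^TC)=\vzero$ and likewise $C(\vone_n g^T)C=\vzero$. Only the $-2G$ term survives the sandwiching, giving $C\tilde{D}C=-2CGC$, that is, $-\tfrac12 C\tilde{D}C=CGC=C\tilde{X}^T\tilde{X}C$.

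For the rightmost equality I would again invoke \Cref{lm:0eigenvec}, which states that $\vone_n$ is a $0$-eigenvector of $\tilde{X}^T\tilde{X}$, so $G\vone_n=\vzero$ and, by symmetry of $G$, also $\vone_n^TG=\vzero$. Expanding $CGC=(I-\tfrac1n\vone_n\vone_n^T)G(I-\tfrac1n\vone_n\vone_n^T)$ and discarding every term containing the factor $G\vone_n$ or $\vone_n^TG$ leaves $CGC=G=\tilde{X}^T\tilde{X}$. Chaining the two displays then yields $-\tfrac12 C\tilde{D}C=C\tilde{X}^T\tilde{X}C=\tilde{X}^T\tilde{X}$, as claimed.

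I do not anticipate a genuine obstacle here: once the centering-eigenvector fact is in place the argument is pure bookkeeping, and this is exactly why the lemma is stated as a corollary of \Cref{lm:0eigenvec}. The one subtlety to flag is that the coordinates of $\tilde{X}$ may be complex, since $\tLambda$ can carry negative diagonal entries; thus $\tilde{X}_i^T\tilde{X}_j$ must be read as the bilinear (not Hermitian) product. This causes no trouble because the Gram matrix $G=U\tLambda U^T$ is real and symmetric, so all the identities above hold verbatim over $\R$, and the resulting $\tilde{D}$ is real-valued as well.
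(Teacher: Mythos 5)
Your proposal is correct and follows essentially the same route as the paper's own proof: the same expansion $\tilde{D}=g\vone_n^T+\vone_n g^T-2G$ with $G=\tilde{X}^T\tilde{X}$, followed by two applications of \Cref{lm:0eigenvec} (first to kill the rank-one terms under conjugation by $C$, then to show $CGC=G$). Your write-up is in fact cleaner than the paper's, which uses the ambiguous notation $\diag(G)\vone_n+\vone_n^T\diag(G)$ for the rank-one terms and leaves the complex-coordinate/bilinear-product subtlety unstated.
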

\begin{proof}
\begin{equation}
    \tilde{D}=\diag(G)\vone_n - 2\tilde{X}^T\tilde{X} + \vone_n^T\diag(G)
\end{equation}
By Lemma~\ref{lm:0eigenvec}, we have
\begin{equation}
    -C\tilde{D}C/2 = C\tilde{X}^T\tilde{X}C
\end{equation}
By Lemma~\ref{lm:0eigenvec} again, we have
\begin{equation}
    C\tilde{X}^T\tilde{X}C=\tilde{X}^T\tilde{X} 
\end{equation}
\end{proof}

Let $Q$ be a Householder reflector matrix defined as follows:
\begin{equation}
    Q\triangleq I-\frac{2}{\vv^T\vv}\vv\vv^T \;\mathrm{ with }\; \vv=[1,\cdots,1, 1+\sqrt{n}]^T
\end{equation}
For any symmetric matrix $A\in \sR^{n\times n}$, let 
$\Phi(A)\in \sR^{(n-1)\times (n-1)}, f(A)\in \sR^{n-1}, \xi(A)\in \sR$ be given by $QAQ$ through the following equation:
\begin{equation}
    QAQ=: \Big[\begin{matrix}
        \Phi(A) & f(A)\\
        f(A)^T & \xi(A)
    \end{matrix}\Big]
\end{equation}

We need the following properties of $Q$:
\begin{lemma}[\cite{Qi_Yuan_comp_nearest_EDM_2014}]\label[lm]{lm:vdv_qdq}
For a symmetric matrix $A$, we have 
\begin{equation}
    CAC=Q\Big[\begin{matrix}
        \Phi(A) & 0\\
        0 & 0
      \end{matrix}\Big]Q.
\end{equation}    
\end{lemma}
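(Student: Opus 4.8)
The plan is to exploit the fact that the Householder reflector $Q$ is engineered precisely so that it rotates the all-ones vector $\vone_n$ onto the last coordinate axis; this turns the centering matrix $C$ into a trivial coordinate projection after conjugation by $Q$. Since $Q$ is a Householder reflector it is symmetric and orthogonal, hence an involution: $Q = Q^T$ and $Q^2 = I$. I would record these facts first, as they let me freely insert and delete pairs $QQ = I$.

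The key computation is to evaluate $Q\vone_n$. Using $\vv = [1,\dots,1,1+\sqrt n]^T$, I would compute $\vv^T\vv = 2\sqrt n(\sqrt n + 1)$ and $\vv^T\vone_n = \sqrt n(\sqrt n + 1)$, so that $Q\vone_n = \vone_n - \frac{2(\vv^T\vone_n)}{\vv^T\vv}\vv = \vone_n - \vv = -\sqrt n\,\ve_n$, where $\ve_n$ is the $n$-th standard basis vector. Conjugating the centering matrix then yields
\[
QCQ = Q\Big(I - \tfrac1n \vone_n\vone_n^T\Big)Q = I - \tfrac1n (Q\vone_n)(Q\vone_n)^T = I - \ve_n\ve_n^T,
\]
which is exactly the diagonal matrix $\mathrm{diag}(1,\dots,1,0)$, i.e. the block $\begin{bmatrix} I_{n-1} & 0\\ 0 & 0\end{bmatrix}$.

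With this in hand the lemma follows by pure block algebra. Writing $C = Q(QCQ)Q$ via the involution property and substituting, the middle pair collapses through $QAQ$:
\[
CAC = Q\,(QCQ)\,(QAQ)\,(QCQ)\,Q.
\]
Since $QCQ = \begin{bmatrix} I_{n-1} & 0\\ 0 & 0\end{bmatrix}$ annihilates the last row and the last column of $QAQ = \begin{bmatrix}\Phi(A) & f(A)\\ f(A)^T & \xi(A)\end{bmatrix}$, the inner triple product reduces to $\begin{bmatrix}\Phi(A) & 0\\ 0 & 0\end{bmatrix}$, which gives the claimed identity. The only genuinely computational step is verifying the normalization constants in $Q\vone_n = -\sqrt n\,\ve_n$; everything downstream is formal manipulation of an orthogonal involution and a single $2\times 2$ block multiplication, so I expect no real obstacle beyond keeping that one constant straight.
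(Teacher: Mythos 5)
Your proof is correct, and it is worth noting that the paper itself gives no proof of this lemma at all: it is imported by citation from Qi--Yuan, so your argument supplies the missing elementary verification. Each step checks out: $\vv^T\vv = (n-1) + (1+\sqrt{n})^2 = 2\sqrt{n}(\sqrt{n}+1)$ and $\vv^T\vone_n = (n-1)+(1+\sqrt{n}) = \sqrt{n}(\sqrt{n}+1)$, so the coefficient $2(\vv^T\vone_n)/(\vv^T\vv)$ is exactly $1$ and $Q\vone_n = \vone_n - \vv = -\sqrt{n}\,\ve_n$; consequently $QCQ = Q^2 - \tfrac{1}{n}(Q\vone_n)(Q\vone_n)^T = I - \ve_n\ve_n^T$, using $Q=Q^T$ and $Q^2=I$. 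The sandwich identity $CAC = Q\,(QCQ)(QAQ)(QCQ)\,Q$ is legitimate by the involution property, and the final block multiplication
\begin{equation*}
\Big[\begin{matrix} I_{n-1} & 0\\ 0 & 0\end{matrix}\Big]
\Big[\begin{matrix} \Phi(A) & f(A)\\ f(A)^T & \xi(A)\end{matrix}\Big]
\Big[\begin{matrix} I_{n-1} & 0\\ 0 & 0\end{matrix}\Big]
= \Big[\begin{matrix} \Phi(A) & 0\\ 0 & 0\end{matrix}\Big]
\end{equation*}
indeed zeroes out the last row and column, matching the paper's definition of $\Phi(A)$ as the leading $(n-1)\times(n-1)$ block of $QAQ$. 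One small observation: your argument never actually uses the symmetry of $A$ (symmetry only ensures the off-diagonal blocks of $QAQ$ are transposes of one another, consistent with the notation), so your proof is if anything slightly more general than the statement requires.
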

\begin{lemma}[\cite{HAYDEN1988115,how_MDS_wrong_NEURIPS2021}]\label[lm]{lm:Q_lastcol}
For any matrix $M$, we use $\operatorname{diag}(M)=(M_{i,i})$ to denote the column vector of $M$'s diagonal entries.
    For a symmetric hollow matrix $A$
    \begin{align*}
        {\left[\begin{array}{c}
            2 f(A) \\
            \xi(A)
            \end{array}\right] } & =\sqrt{n} Q \operatorname{diag}\left(Q\left[\begin{array}{cc}
            \Phi(A) & 0 \\
            0 & 0
            \end{array}\right] Q\right)
    \end{align*}    
\end{lemma}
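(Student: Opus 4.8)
The plan is to exploit the two defining algebraic properties of the Householder reflector $Q$: it is symmetric and involutive ($Q^2=I$), and it is tailored so that it reflects the all-ones vector onto a coordinate axis. Concretely, writing $e_n$ for the $n$-th standard basis vector, the Householder vector is $\vv=\vone_n+\sqrt{n}\,e_n$ with $\|\vone_n\|=\sqrt{n}$, so a direct computation of $\vv^T\vone_n$ and $\vv^T\vv$ gives $Q\vone_n=-\sqrt{n}\,e_n$, and hence, by involutivity, $Qe_n=-\tfrac{1}{\sqrt{n}}\vone_n$. I would establish this single identity first, since everything downstream hinges on it.

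Next I would isolate the rank-two correction that distinguishes $QAQ$ from its truncation. Writing $M:=QAQ=\left[\begin{smallmatrix}\Phi(A)&f(A)\\ f(A)^T&\xi(A)\end{smallmatrix}\right]$ and letting $N:=\left[\begin{smallmatrix}\Phi(A)&0\\ 0&0\end{smallmatrix}\right]$ be its truncation (so that $CAC=QNQ$ by \Cref{lm:vdv_qdq}), the difference $M-N$ is supported only on the last row and last column and therefore admits the symmetric rank-two form $M-N=e_n h^T+h e_n^T$ with $h:=\left[\begin{smallmatrix}f(A)\\ \xi(A)/2\end{smallmatrix}\right]$. This rank-two splitting is the key structural observation of the whole argument.

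Then I would conjugate back by $Q$. Applying $Q$ on both sides of $M=N+e_nh^T+he_n^T$ and using $Q^2=I$ gives $A=QMQ=CAC+(Qe_n)(Qh)^T+(Qh)(Qe_n)^T$, and substituting $Qe_n=-\tfrac{1}{\sqrt{n}}\vone_n$ turns the two rank-one pieces into $-\tfrac{1}{\sqrt{n}}\bigl(\vone_n(Qh)^T+(Qh)\vone_n^T\bigr)$. Taking the diagonal of both sides, using the elementary fact that $\operatorname{diag}(\vone_n w^T)=\operatorname{diag}(w\vone_n^T)=w$ for any vector $w$, together with the hollow hypothesis $\operatorname{diag}(A)=0$, yields $\operatorname{diag}(CAC)=\tfrac{2}{\sqrt{n}}\,Qh$. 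Solving for $h$ (again via $Q^2=I$) gives $2h=\sqrt{n}\,Q\operatorname{diag}(CAC)$, which is exactly the claim once one rewrites $2h=\left[\begin{smallmatrix}2f(A)\\ \xi(A)\end{smallmatrix}\right]$ and replaces $CAC$ by $Q\left[\begin{smallmatrix}\Phi(A)&0\\0&0\end{smallmatrix}\right]Q$ via \Cref{lm:vdv_qdq}.

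The argument is entirely linear-algebraic, so I expect no analytic difficulty; the only genuinely delicate point is the bookkeeping in the rank-two splitting and in passing the diagonal operator through the $Q$-conjugation. The Householder identity $Qe_n=-\tfrac{1}{\sqrt{n}}\vone_n$ and the hollow condition are precisely what make each off-diagonal rank-one term contribute its underlying vector to the diagonal, so the main obstacle is simply keeping the block and coordinate indexing consistent throughout.
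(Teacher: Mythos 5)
Your proof is correct, but note that the paper does not actually prove this lemma at all --- it imports it verbatim from Hayden--Wells and Sonthalia et al., so there is no in-paper proof to compare against; your argument is a genuine, self-contained reconstruction. Each step checks out: with $\vv=\vone_n+\sqrt{n}\,e_n$ one has $\vv^T\vv=2n+2\sqrt{n}$ and $\vv^T\vone_n=n+\sqrt{n}$, so $Q\vone_n=\vone_n-\vv=-\sqrt{n}\,e_n$ and hence $Qe_n=-\tfrac{1}{\sqrt{n}}\vone_n$ by involutivity; the rank-two splitting $M-N=e_nh^T+he_n^T$ with $h=\bigl[f(A)^T,\ \xi(A)/2\bigr]^T$ is right, the factor $\xi(A)/2$ being exactly what is needed since the $(n,n)$ entry is hit by both rank-one terms; conjugating by $Q$ and using \Cref{lm:vdv_qdq} gives $A=CAC+(Qe_n)(Qh)^T+(Qh)(Qe_n)^T$, and taking diagonals with $\operatorname{diag}(\vone_n w^T)=\operatorname{diag}(w\vone_n^T)=w$ and $\operatorname{diag}(A)=0$ yields $2h=\sqrt{n}\,Q\operatorname{diag}(CAC)$, which is the claim. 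This is also essentially the mechanism behind the original Hayden--Wells derivation (the hollow condition converting the unknown last row/column into the diagonal of the centered matrix), so your proof both validates the citation and makes the paper's error decomposition in \Cref{thm:general_lower_error} fully self-contained; the one presentational improvement would be to state explicitly that $Q$ is symmetric as well as involutive, since you silently use $h^TQ=(Qh)^T$ when conjugating the rank-one terms.
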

Now we are ready to prove our main theorem.
\begin{proof}[Proof of Theorem~\ref{thm:general_lower_error}]\label[proof]{proof:general_lower_error}
    \begin{align}\label[eq]{eq:error_decomp_C123}
        \norm{\tilde{D}-D}_F^2 = &\norm{ Q(\tilde{D}-D)Q}_F^2\\
        =&\norm{\Phi(\tilde{D})-\Phi(D)}_F^2 + \norm{\xi(\tilde{D})-\xi(D)}_F^2 + 2\norm{f(\tilde{D})-f(D)}_F^2
    \end{align}
For the first term: 
\begin{equation}
    \begin{aligned}
        \norm{\Phi(\tilde{D})-\Phi(D)}_F^2 &= \norm{
            Q\Big[\begin{matrix}
                \Phi(\tilde{D}) & 0\\
                0 & 0
              \end{matrix}\Big]Q
            -Q\Big[\begin{matrix}
                \Phi(D) & 0\\
                0 & 0
              \end{matrix}\Big]Q
        }\\
        \mbox{by Lemma }\ref{lm:vdv_qdq}\; &= \norm{C\tilde{D}C-CDC}_F^2\\
        &= 4\norm{\frac{-C\tilde{D}C}{2}-\frac{-CDC}{2}}_F^2\\
        \mbox{by Lemma }\ref{lm:vdv_xx}\;&= 4\norm{\tilde{X}^T\tilde{X}-\frac{-CDC}{2}}_F^2\\
        &= 4\norm{ U(\Lambda-\tLambda) U}_F^2\\
        &= 4\norm{\Dlambda}_F^2\\
        &= 4\left[\vwbar^T\vlambda^{(2)}+\vw^T\Dlambda^{(2)}\right]\\
        &= \tilde{C}_1
    \end{aligned}
\end{equation}


For the second term: 

From $0=\textnormal{Tr}(D-\tilde{D}) = \textnormal{Tr}(\Phi(D)-\Phi(\tilde{D})) + (\xi(D)-\xi(\tilde{D}))$,
we can get
\begin{equation}
    \tilde{C_2} := \norm{\xi(\tilde{D})-\xi(D)}_F^2 = (\xi(\tilde{D})-\xi(D))^2 = 4(\sum_{i=1}^n \Delta\vlambda_i)^2 = 4 [\vwbar^T\vlambda + \vw^T\Delta\vlambda]^2
\end{equation}
For the third term:

By Lemma~\refeq{lm:Q_lastcol},
\begin{align*}
    {\left[\begin{array}{c}
        2 f(D) \\
        \xi(D)
        \end{array}\right] } & =\sqrt{n} Q \operatorname{diag}\left(Q\left[\begin{array}{cc}
        \Phi(D) & 0 \\
        0 & 0
        \end{array}\right] Q\right) . \\
        {\left[\begin{array}{c}
        2 f(\tilde{D}) \\
        \xi(\tilde{D})
        \end{array}\right] } & =\sqrt{n} Q \operatorname{diag}\left(Q\left[\begin{array}{cc}
        \Phi(\tilde{D}) & 0 \\
        0 & 0
    \end{array}\right] Q\right) .
\end{align*}
Taking the difference and then using a theorem on diagonalization and the Hadamard product from~\cite{matrixanalysis}, we have:
\begin{align*}
    {\left[\begin{array}{c}
        2 f(D) - 2 f(\tilde{D})\\
        \xi(D) - \xi(\tilde{D})
        \end{array}\right] } & =\sqrt{n} Q \operatorname{diag}\left(Q\left[\begin{array}{cc}
        \Phi(D)-\Phi(\tilde{D})  & 0 \\
        0 & 0
    \end{array}\right] Q\right) \\
    \mbox{by Lemma }~\ref{lm:vdv_qdq}\ & = \sqrt{n} Q \operatorname{diag}(C(D-\tilde{D})C) \\
     & = -2\sqrt{n} Q \operatorname{diag}(U\diag(\Dlambda) U^T) \\
     & = -2\sqrt{n} Q (U\odot U) (\Dlambda)
\end{align*}
Therefore, 
\begin{equation}
    \norm{
        {\left[\begin{array}{c}
        2 f(D) - 2 f(\tilde{D})\\
        \xi(D) - \xi(\tilde{D})
        \end{array}
    \right] }
    }_F^2
    = 4n\norm{(U\odot U) (\Dlambda)}_F^2
\end{equation}
Finally, we have that
\begin{equation}
    2\norm{ f(D) -  f(\hat{D})}_F^2=2n\norm{(U\odot U)(\Dlambda)}_F^2-\tilde{C}_2/2=\tilde{C}_3
\end{equation}
The above arguments prove the first part of the Theorem. 

Now we want to analyze the lower bound of 
\begin{equation}
    \tilde{C}_1+\tilde{C}_2=4[\vwbar^T\vlambda^{(2)} + \vw^T (\Delta\vlambda)^{(2)}]  + 4[\vwbar^T\vlambda+\vw^T\Delta\vlambda]^2    
\end{equation}

Denote $C:=\vwbar^T\vlambda, Z:=\vw^T\Delta\vlambda$. By the Cauchy–Schwartz inequality, we know that
\begin{equation*}
    \vw^T(\Delta\vlambda)^{(2)}\geq \frac{1}{k} (\vw^T\Delta\vlambda)^2.
\end{equation*}

The equality is obtained when $\Delta\vlambda$ has the same value on all coordinates of $W$.
Now check that
\begin{align}
    \vw^T (\Delta\vlambda)^{(2)}  + [\vwbar^T\vlambda+\vw^T\Delta\vlambda]^2 &\geq \frac{1}{k} (\vw^T \Delta\vlambda)^2 + (C+Z)^2\\
    & = \frac{1}{k} Z^2 + (C+Z)^2\\
    & = \frac{1+k}{k} Z^2 + 2CZ + C^2\\
    & \geq \frac{C^2}{1+k}
\end{align}
The last inequality achieves the lower bound with $Z=-\frac{k}{1+k}C$ through the standard analysis of univariate quadratic equations.
Based on the above arguments,
we have that
\begin{equation}
    \tilde{C}_1 + \tilde{C}_2 \geq 4\vwbar^T\vlambda^{(2)} + \frac{4(\vwbar^T\vlambda)^2}{1+k}
\end{equation}
Since the lower bounds from the Cauchy–Schwartz inequality and the univariate quadratic equation are all tight, we can conclude that the final lower bound is tight and it can be achieved with 
\begin{equation}
    \Delta\vlambda = -\frac{\vwbar^T\vlambda}{1+k}\vw \iff \tlambda=\vlambda\odot \vw+ \frac{\vwbar^T\vlambda}{1+k}\vw
\end{equation}
This finishes the proof.
\end{proof}

\section{Proofs of Eigenvalue Selection Algorithm}\label{appendix:selection}

\begin{proof}[Proof of \Cref{thm:selection}]
For any subset $S \subseteq \mathcal{L}$, define $F(S) = \sum_{\lambda\in \mathcal{L} \setminus S} \lambda^2+\bigl(\sum_{\lambda \in \mathcal{L} \setminus S} \lambda \bigr)^2$. Note that this is the value we want to minimize 
(i.e., $C_1 + C_2$). Without loss of generality, we may assume that none of the chosen eigenvalues in $S$ is 0, since it does nothing to change $F(S)$.

We first prove the first claim of \Cref{thm:selection}.


\begin{proof}
    First, we show that an optimal solution $S$ of size $k$ exists. Assume we have some optimal solution $S$ of size less than $k$. Let $b \in \mathcal{L}\setminus S$. Then we have:
    \begin{equation}
        F(S \cup \{b\}) - F(S)=(-b^2) + (H(S)-b)^2-H(S)^2 = -2bH(S)
    \end{equation}
    Now it is clear that if $H(S)$ is 0, we can add any eigenvalue to $S$ and $F(S)$ would not change. If $H(S)$ is not 0, then adding an eigenvalue to $S$ that matches the sign of $H(S)$, would reduce $F(S)$ and the opposite would increase $F(S)$. We will continue to make use of this fact. Adding an eigenvalue that matches the sign of $H(S)$ is also always possible since $H(S)$ is the sum of the eigenvalues we are choosing from. Thus, we can just continue to add eigenvalues to $S$ until it is of size $k$.
    
    Next, assuming $S$ is of size $k$, we show that it must be the one as described in \Cref{thm:selection}. We can prove this by contradiction. Without loss of generality, we may assume that $S$ contains a positive eigenvalue $b$ such that there exists some $a \in \mathcal{L} \setminus S$ satisfying $a > b$. First, we observe that since $S$ is optimal:
    \begin{equation}
        0 \leq F(S \setminus \{b\}) - F(S)=b^2 + (H(S)+b)^2-H(S)^2 = 2bH(S)+2b^2
    \end{equation}
    which reduces to $H(S) \geq -b$. Then, we have:
    \begin{equation}
        F(S \cup \{a\} \setminus \{b\}) - F(S)=(b^2-a^2) + (H(S)+b-a)^2-H(S)^2 = (b-a)(2b+2H(S))
    \end{equation}
    We know that $b-a < 0$ and $b > 0$, $H(S) \geq -b$, so we get that the expression is negative. Thus, $S \cup \{a\} \setminus \{b\}$ has a lower value under $F$ and contradicts our assumption that $S$ is optimal.
\end{proof}
Let $S$ be the optimal solution from \Cref{thm:selection}. Let $T$ be the solution obtained by EV-Selection. We will show that $F(S) = F(T)$. We use contradiction again. Without loss of generality, we assume that $S$ has more positive eigenvalues than $T$. Now consider $H(S \cap T)$. From a similar argument used in the proof of \Cref{thm:selection} (1), due to $S \setminus T$ containing only positive eigenvalues, $H(S \cap T) \geq 0$. Now if $H(S \cap T) = 0$, we know that $S \setminus T$ contains one eigenvalue, --- otherwise, removing a positive eigenvalue from $S$ would decrease $F(S)$. Then, we know that $F(S) = F(T)$ and that is a contradiction. If $H(S \cap T) > 0$, then we have reached a contradiction in the definition of $T$ because EV-Selection will only choose eigenvalues in $T \setminus S$ when all negative eigenvalues in $T \cap S$, $Q$, has been chosen. However, for all subsets $P$ of positive eigenvalues of $T$, $H(P \cup Q) > 0$, so no eigenvalue in $T \setminus S$ will ever be chosen by EV-Selection.
\end{proof}

\begin{algorithm2e}
\caption{Eigenvalue Selection for Generalized Neuc-MDS}\label{alg:ev-selection-plus}
\KwData{Sorted eigenvalues $\vlambda$, integer $k$.}
\KwResult{Set $S$ of selected eigenvalues}
Let $\mathcal{L}$ be the set of all eigenvalues $\vlambda$, $S = \emptyset$

\While {$|S| < k$}{
$T_+ = S\cup \argmax_{\lambda \in \mathcal{L}\setminus S, \lambda > 0}$

$T_- = S\cup \argmax_{\lambda \in \mathcal{L}\setminus S, \lambda < 0}$

$A_1 = \sum_{\lambda\in \mathcal{L} \setminus T_+} \lambda^2+\frac{1}{|T_+|+1}\bigl(\sum_{\lambda \in \mathcal{L} \setminus T_+} \lambda \bigr)^2$

$A_2 = \sum_{\lambda\in \mathcal{L} \setminus T_-} \lambda^2+\frac{1}{|T_-|+1}\bigl(\sum_{\lambda \in \mathcal{L} \setminus T_-} \lambda \bigr)^2$

\eIf {$A_1 < A_2$}{
    $S = T_+$
}{
    $S = T_-$
    }
}
\end{algorithm2e}

\begin{proof}[Proof of \Cref{prop:second_algo}]
For any subset $S \subseteq \mathcal{L}$, define $F(S) = \sum_{\lambda\in \mathcal{L} \setminus S} \lambda^2+\frac{1}{|S|+1}\bigl(\sum_{\lambda \in \mathcal{L} \setminus S} \lambda \bigr)^2$. Note that this is the value we want to minimize (i.e., $C_1 + \frac{1}{|S|+1}C_2$). Without loss of generality, we may assume that none of the chosen eigenvalues in $S$ is 0, since it does nothing to change $F(S)$.

We first prove the first claim in \Cref{prop:second_algo}.


\begin{proof}
    First, we show that an optimal solution $S$ of size $k$ exists. Assume we have some optimal solution $S$ of size less than $k$. Let $b \in \mathcal{L}\setminus S$. Then we have:
    \begin{equation}
        F(S \cup \{b\}) - F(S)=(-b^2) + \frac{1}{|S|+2}(H(S)-b)^2-\frac{1}{|S|+1}H(S)^2 \leq -\frac{2}{|S|+1}bH(S)
    \end{equation}
    Now it is clear that if $H(S)$ is 0, adding any eigenvalue to $S$ would not increase $F(S)$. If $H(S)$ is not 0, then adding an eigenvalue to $S$ that matches the sign of $H(S)$, would reduce $F(S)$. We will continue to make use of this fact. Adding an eigenvalue that matches the sign of $H(S)$ is also always possible since $H(S)$ is the sum of the eigenvalues we are choosing from. Thus, we can just continue to add eigenvalues to $S$ until it is of size $k$.
    
    Next, we will show there must be an $S$ as described in \Cref{prop:second_algo}. We can prove this by contradiction. Without loss of generality, we may assume some optimal $S$ contains a positive eigenvalue $b$ such that there exists some $a \in \mathcal{L} \setminus S$ satisfying $a > b$. We can directly observe since $S$ is optimal
    \begin{align*}
        0 \leq F(S \cup \{a\} \setminus \{b\}) - F(S)=& (b^2-a^2) + \frac{1}{|S|+1}((H(S)+b-a)^2-H(S)^2)  \\
        =& \frac{1}{|S|+1}(b-a)((|S|+2)b+|S|a+2H(S))
    \end{align*}
    Since we know $b-a < 0$, we get that $(|S|+2)b+|S|a+2H(S) \leq 0$. First, for the case where it is equal to 0, we can just take out $b$ and put in $a$. If $S$ still does not have the greatest positive eigenvalues, we can repeat our analysis and examine another candidate positive eigenvalue to be replaced. In the case that it is strictly negative, then we have $H(S) < 0$ since $|S|, b, a > 0$. Then there must exist some negative eigenvalue $c$ that has not been chosen and if we replace $a$ with $c$ in our analysis, we get that $b - c > 0$ and $(|S|+2)b+|S|c+2H(S) < 0$ which gives $F(S \cup \{c\} \setminus \{b\}) < F(S)$ which contradicts the optimality of $S$.
\end{proof}
Let $S$ be the optimal solution from \Cref{prop:second_algo}. Let $T$ be the solution obtained by \Cref{alg:ev-selection-plus}. We will show that $F(S) = F(T)$. Note that now that we know $|S| = k$, we now define $F = \sum_{\lambda\in \mathcal{L} \setminus S} \lambda^2+\frac{1}{k+1}\bigl(\sum_{\lambda \in \mathcal{L} \setminus S} \lambda \bigr)^2$. We use contradiction again. Without loss of generality, we assume that $S$ has more positive eigenvalues than $T$. Let $a$ be the largest eigenvalue in magnitude from $T \setminus S$. Let $T'$ be the set chosen by \Cref{alg:ev-selection-plus} right before choosing $a$. Then let $b$ be the largest eigenvalue in $S \setminus T'$. By definition of $T$, we know that:
\begin{align*}
    0 \geq F(T' \cup \{a\}) - F(T' \cup \{b\}) = &(b^2-a^2) + \frac{1}{k+1}((H(T') - a)^2 - (H(T')-b)^2)\\
    =& (b^2-a^2) + \frac{1}{k+1}(a-b)(a + b -2H(T'))
\end{align*}
Now let's compare that with:
\begin{align*}
    F(S \cup \{a\} \setminus \{b\}) - F(S) = &(b^2-a^2) + \frac{1}{k+1}((H(S) +b - a)^2 - (H(S) +b -b)^2) \\
    =& (b^2-a^2) + \frac{1}{k+1}(a-b)(a+b-2(H(S) + b))
\end{align*}
Now clearly $H(S) + b \leq H(T')$. In the case that $H(S) +b = H(T')$, we clearly have $S \cup \{a\} \setminus \{b\} = T$, so we get $F(S) = F(T)$. In the case that $H(S) + b < H(T')$, we also know that $a-b < 0$, so we get $F(S \cup \{a\} \setminus \{b\}) - F(S) < F(T' \cup \{a\}) - F(T' \cup \{b\}) \leq 0$. That violates the fact that $S$ is an optimal set, so we have a contradiction.
\end{proof}

\section{Computations on Random Matrices}\label{appendix:randommatrices}

We compare the classical multidimensional scaling and our non-Euclidean multidimensional scaling under the random matrix context, and give the proof of Theorem \ref{thm:rnd-matrix-compare}.

We start with a symmetric 
random matrix $B\in \mathbb{R}^{n\times n}$ as in Proposition \ref{prop:semicc-law}, and want to select $k$ eigenvalues by using the classical and non-Euclidean multidimensional scaling.

We start with classical multidimensional scaling. To select the largest $k$ eigenvalues, we need to select all eigenvalues greater than $(2-r)\sqrt{n}\sigma$ such that 
\begin{align}\label{cmdschooser}
    \frac{k}{n}=\frac{1}{2\pi \sigma^2}\int_{(2-r)\sigma}^{2\sigma}\sqrt{4\sigma^2-x^2}dx=\frac{1}{2}-\frac{1}{\pi}\arcsin{(1-\frac{r}{2})}-\frac{1}{\pi}(1-\frac{r}{2})\sqrt{r\cdot (1-\frac{r}{4})}.
\end{align}
There is not an explicit way to write $r$ as a function of $k$. For a fixed $r$, since we want to drop all eigenvalues smaller than $(2-r)\sqrt{n}\sigma$, we have 
\begin{align}\label{cmdserror}
\begin{split}
      & e_C \\
      =&\ \Sigma_{\lambda \in \mathcal{L}\setminus S} \lambda^2+(\Sigma_{\lambda \in \mathcal{L}\setminus S}\lambda)^2\\
      =&\ n^2\cdot (\Sigma_{\lambda \in \mathcal{L}\setminus S} (\frac{\lambda}{\sqrt{n}})^2\cdot \frac{1}{n})+n^3\cdot (\Sigma_{\lambda \in \mathcal{L}\setminus S}\frac{\lambda}{\sqrt{n}}\cdot \frac{1}{n})^2\\
    =&\  n^2\cdot \frac{1}{2\pi \sigma^2}\int_{-2\sigma}^{(2-r)\sigma}x^2\sqrt{4\sigma^2-x^2}dx+n^3\cdot (\frac{1}{2\pi \sigma^2}\int_{-2\sigma}^{(2-r)\sigma}x\sqrt{4\sigma^2-x^2}dx)^2\\
    =&\ n^2\sigma^2\Big[\frac{1}{2}+\frac{1}{\pi}\arcsin{(1-\frac{r}{2})}+\frac{1}{\pi}(1-\frac{r}{2})(1-2r+\frac{r^2}{2})\sqrt{r(1-\frac{r}{4})}+\frac{16n}{9\pi^2}r^3(1-\frac{r}{4})^3\Big].
\end{split}
\end{align}

For non-Euclidean multidimensional scaling, to select the $k$ eigenvalues with largest magnitude, we need to select all eigenvalues with magnitude greater than $(2-r)\sqrt{n}\sigma$ such that 
\begin{align}\label{neucmdschooser}
\begin{split}
    \frac{k}{n}=&\ \frac{1}{2\pi \sigma^2}(\int_{(2-r)\sigma}^{2\sigma}\sqrt{4\sigma^2-x^2}dx+\int_{-2\sigma}^{-(2-r)\sigma}\sqrt{4\sigma^2-x^2}dx)\\
    =&\ 1-\frac{2}{\pi}\arcsin{(1-\frac{r}{2})}-\frac{2}{\pi}(1-\frac{r}{2})\sqrt{r\cdot (1-\frac{r}{4})}.
    \end{split}
\end{align}
For a fixed $r$, since we want to drop all eigenvalues with magnitude smaller than $(2-r)\sqrt{n}\sigma$, we have 
\begin{align}\label{neucmdserror}
\begin{split}
      e_N=&\ \Sigma_{\lambda \in \mathcal{L}\setminus S} \lambda^2+(\Sigma_{\lambda \in \mathcal{L}\setminus S}\lambda)^2\\
      =&\ n^2\cdot (\Sigma_{\lambda \in \mathcal{L}\setminus S} (\frac{\lambda}{\sqrt{n}})^2\cdot \frac{1}{n})\\
    =&\  n^2\cdot \frac{1}{2\pi \sigma^2}\int_{-(2-r)\sigma}^{(2-r)\sigma}x^2\sqrt{4\sigma^2-x^2}dx\\
    =&\ n^2\sigma^2\Big[\frac{2}{\pi}\arcsin{(1-\frac{r}{2})}+\frac{2}{\pi}(1-\frac{r}{2})(1-2r+\frac{r^2}{2})\sqrt{r(1-\frac{r}{4})}\Big].
\end{split}
\end{align}

We first consider the case that $k=o(n)$, i.e. we want to reduce the dimension from $n$ to a much smaller $k$. Under this assumption, $r$ is a very small positive number. In the classical case, by applying the Taylor series to equation (\ref{cmdschooser}), we get $r\approx (\frac{3\pi k}{2n})^{\frac{2}{3}}$. Then we plug it into equation (\ref{cmdserror}) to get $$e_C\approx n^2\sigma^2(1+\frac{4k^2}{n}-\frac{4k}{n}).$$
Similarly, in the non-Euclidean case, we apply the Taylor series to equation (\ref{neucmdschooser}), we get $r\approx (\frac{3\pi k}{4n})^{\frac{2}{3}}$. Then we plug it into equation (\ref{neucmdserror}) to get $$e_N\approx n^2\sigma^2(1-\frac{4k}{n}).$$ This finishes the proof of Theorem \ref{thm:rnd-matrix-compare} (1).

Now we turn to the case that $k=cn$ for a constant $c\in [0,1]$. We first solve $c=\frac{k}{n}$ in equations (\ref{cmdschooser}) and (\ref{neucmdschooser}) to get the corresponding $r$, then plug the $r$-values in equations (\ref{cmdserror}) and (\ref{neucmdserror}) to get the corresponding $e_C$ and $e_N$, respectively. Since equations (\ref{cmdschooser}) and (\ref{neucmdschooser}) can not be solved explicitly, we can only get numerical values of $e_C$ and $e_N$. For $c\in (0,0.5]$, numerical values of $e_C$ and $e_N$ as shown in \Cref{tab:randommatrixerrors1}. For $c\in [0.5,1]$, the classical multidimensional scaling stabilizes, with $e_C\approx (0.5+0.1801\cdot n)n^2\sigma^2$. Meanwhile, the error $e_N$ for non-Euclidean multidimensional scaling decreases to zero as $c$ increases to $1$. More precisely, if $c=1-\eps$ with very small $\eps >0$, equations (\ref{neucmdschooser}) and (\ref{neucmdserror}) give $$e_N\approx \frac{\pi^2}{12}\eps^3 n^2\sigma^2.$$ Some numerical values of $e_N$ with $c\in [0.5,1]$ are shown in \Cref{tab:randommatrixerrors2}. This finishes the proof of Theorem \ref{thm:rnd-matrix-compare} (2).

\begin{table}
\footnotesize
    \centering
    \begin{tabular}{|c|c|c|c|}
    \hline
        $c$ & $0.05$ & $0.1$ & $0.15$ \\
        \hline
        $e_C/(n^2\sigma^2)$ & $0.8432+0.0078\cdot n$ & $0.7322+0.0265\cdot n$ & $0.6513+0.0512\cdot n$  \\
        \hline
        $e_N/(n^2\sigma^2)$ &   $0.8278$ & $0.6864$ & $0.5666$   \\
        \hline
        \hline
        $c$ & $0.2$ & $0.25$ & $0.3$ \\
        \hline
        $e_C/(n^2\sigma^2)$ & $0.5933+0.0785\cdot n$ & $0.5531+0.1055\cdot n$ & $0.5269+0.1304\cdot n$  \\
        \hline
        $e_N/(n^2\sigma^2)$ & $0.4644$ & $0.3771$ & $0.3027$  \\
        \hline
        \hline
        $c$ & $0.35$ & $0.4$ & $0.45$ \\
        \hline
        $e_C/(n^2\sigma^2)$  & $0.5112+0.1512\cdot n$ & $0.5033+0.1670\cdot n$ & $0.5004+0.1768\cdot n$\\
        \hline
        $e_N/(n^2\sigma^2)$ & $0.2397$  & $0.1866$ & $0.1425$\\
        \hline
    \end{tabular}
    \vspace*{2mm}
    \caption{This table shows the error terms $e_C$ and $e_N$ of different choices of $c\in (0,0.5)$ with $c=k/n$, normalized by dividing $n^2\sigma^2$.}
    \label{tab:randommatrixerrors1}
\end{table}

\begin{table}
\small
    \centering
    \begin{tabular}{|c|c|c|c|c|c|}
    \hline
        $c$ & $0.5$ & $0.55$ & $0.6$ & $0.65$ & $0.7$\\
        \hline
        $e_N/(n^2\sigma^2)$ & $0.1063$ & $0.0770$ & $0.0537$  & $0.0358$  & $0.0225$\\
        \hline
        $c$ & $0.75$ & $0.8$ & $0.85$ & $0.9$ & $0.95$\\
        \hline        
        $e_N/(n^2\sigma^2)$ & $0.0130$ & $0.0066$ & $0.0028$ & $0.0008$ & $0.0001$\\
        \hline
    \end{tabular}
        \vspace*{2mm}
    \caption{This table shows the error term $e_N$ of different choices of $c\in [0.5,1)$ with $c=k/n$, normalized by dividing $n^2\sigma^2$.}
    \label{tab:randommatrixerrors2}
\end{table}

\section{More Details and Results on Experiments}
\label{appendix:experiment}
In this appendix we fill the missing details in the experiment section. Our experiments are implemented with Intel Core i9 CPU of 32GB memory, no GPU is required and the execution time is no longer than 30 seconds.  In the rest of this section, we first introduce the generation process of synthetic datasets in \Cref{subsec:syn-data-gen}; Next we define the evaluation metrics and perturbation approaches in \Cref{subsec:eva-pert-metric};  More results on dissimilarities regarding the scaled additive error, negative distances and negative eigenvalues selected, are demonstrated in \Cref{subsec:more-dissim}. With respect to the \emph{dimensionality paradox}, we provide illustrations similar to \Cref{fig:dim-paradox} on other datasets in \Cref{subsec:more-paradox}. 

\subsection{Synthetic Datasets Generation}
\label{subsec:syn-data-gen}
The \emph{random-simplex} dataset is generated with the idea of creating a simplex with one added dimension that generates all of the variance in the the distances such that the added dimension corresponds with a negative eigenvalue. This way, we should have a dimension with a large negative eigenvalue that is impactful on the stress of the embedding.

Specifically, this dataset was generated with points $x_1,...x_{1000}$, each point having 1000 dimensions. For each point $x_i$, the first 100 coordinates were chosen uniformly randomly between 0 and .01. The next 899 coordinates were chosen uniformly randomly between 0 and $\sqrt{0.5/899}$. The last coordinate is $i\times 0.3 / 1000$. Then, the distance between $x_i$ and $x_j$ was found in the following way:
\[d(x_i, x_j) = \sqrt{\sum_{k=1}^{100} (x_{i}(k) - x_j(k))^2 - \sum_{k=101}^{1000} (x_{i}(k) - x_j(k))^2}\]
where $x_i(k)$ is the $k$th coordinate of $x_i$. The first 999 coordinates essentially form a simplex, because selecting a large number of coordinates randomly from the same distribution causes all of the distances to converge to a normal distribution with low variance. Thus, all of the distances between points using just the first 999 coordinates are very similar. Then, the last coordinate has ``large" discrepancies between all of the points, causing most of the differences in the distances.

The $\emph{Euclidean-ball}$ metric was inspired by \cite{Xu2011-fo}. The set of distances between objects in space does not follow triangle inequality, so it becomes a non-euclidean dissimilarity measure. The particular dataset we use generates the distances by first randomly selecting 1000 points uniformly in a 10 dimensional hypercube of length 100. Each of these points is now the center of a ball. Then, for each ball, with probability 0.9, we select a radius uniformly randomly between 0 and 5. With probability 0.1, we let the radius be 0.8 times the distance between the center of this ball to the closest other ball. This closest other ball is based on the radius of the other ball if it has already been decided. The final distance matrix is just filled by the distances between the 1000 balls. In this way, we attempt to create some balls with large radius, so that the triangle inequality is heavily violated, therefore causing the dataset to deviate more from the Euclidean setting.

\subsection{Evaluation and Perturbation Metrics}
\label{subsec:eva-pert-metric}
In addition to STRESS, our evaluation metric include scaled additive error and average geometric distortion. Scaled additive error by definition is allowing scaling of the distance matrix of our embedding before calculating stress. To do this, we first ran the embedding method (e.g., cMDS, Neuc-MDS, or other methods) on the given dissimilarity squared matrix $D$ to get the output dissimilarity squared matrix $\tilde{D}$. Then, we would flatten both $D$ and $\tilde{D}$ to vectors, and project $D$ onto the line through $\tilde{D}$ to get $E$. Then we calculate $\norm{D-E}$ to get scaled additive error. Clearly, this is equivalent to allowing scaling of $\tilde{D}$ to find the minimum stress. For average geometric distortion, we again input $D$ to get $\tilde{D}$, but this time, we evaluate the distortion of each dissimilarity by dividing entries of $\sqrt{D}$ by entries of $\sqrt{\tilde{D}}$. Since distortion is not well defined when dissimilarities become complex, we skip those. Then, with the remaining valid distortions, we scale the distortions so that there's an equal number of them greater than 1 and less than 1, then for those that remain less than 1, we take their reciprocal. Then, we finally take the geometric average of all of these distortions. In this way, we basically again allowed scaling of the dissimilarities to find the minimum possible average distortion.

Next, we had to decide on metrics to use for the image datasets. We took the following 3 metrics from \cite{how_MDS_wrong_NEURIPS2021}. For the first metric, the distance between two images is first finding the Euclidean distance between their coordinate representations, and then adding Gaussian noise to the distances. Because the noise is added directly onto the distances, the end result is highly likely to be non-Euclidean. For the second metric, we build a $k$ nearest neighbor graph of the images based on their Euclidean distances. Since the end result is a graph structure, the shortest path distances are also highly likely to be non-Euclidean. For the third metric, we randomly removed entries from the coordinate representation of the images. Then, the distance between two images is decided by taking the Euclidean distance between coordinates that both images still included. Here, the Euclidean property breaks down again because triangle inequality would no longer have to hold. In \Cref{tab:stress-all}, we choose the $k$-NN metric with $k=2$. In the next Appendix section, we show results for higher $k$ and other metrics applied to the image datasets. As for genomics datasets, we use the entropic affinities commonly used in t-SNE methods. The main idea is to apply an adaptive kernel, whose bandwidth depends on a parameter termed as perplexity, to pairwise data entries.

\subsection{More Results on Dissimilarity Error}
\label{subsec:more-dissim}
In addition to STRESS and average distortion already reported in \Cref{tab:stress-all}, we further show the third metric: scaled additive error as defined in \Cref{subsec:eva-pert-metric}. Note that our algorithms may produce negative distances, we also report number of negative distances and number of negative eigenvalues selected. All original results (without scaling) on 10 datasets are demonstrated in \Cref{tab:evaluation-2-all}. Recall we mentioned the results in \Cref{tab:stress-all} contains some scaling on the synthetic data and images, we also provide the original values in \Cref{tab:evaluation-1-all}.

\begin{table*}[ht]
\scriptsize
\centering
\renewcommand{\arraystretch}{1.4}
\caption{Original Evaluation Results on All Datasets for Lower-MDS (L-MDS), Neuc-MDS (N-MDS) and Neuc-MDS$^+$ (N-MDS$^+$). Metrics include scaled additive error, number of negative distances and number of negative eigenvalues selected. }
\vspace{0.5em}
\label{tab:evaluation-2-all}
\begin{tabular}{ccccccccc} 
\thickhline 
  Dataset   & \multicolumn{4}{c}{ Scaled Additive Error} & \multicolumn{2}{c}{\# Neg Distances}   & \multicolumn{2}{c}{\# Neg $\lambda$ Selected}  \\ 
  \hline
  & cMDS & L-MDS & N-MDS  & N-MDS${^+}$ & N-MDS & N-MDS${^+}$ & N-MDS &  N-MDS${^+}$\\
  Random-simplex & 17758 & 17760 & 3415 & \textbf{1392} & 0 & 0 & 8 & 1   \\
  Euclidean-ball & 14909 & 13093 & \textbf{3132} & 3631  & 798 & 523 & 90 & 87 \\
  \hline
  Brain \tiny{(50161)} & 0.131 & 0.132 & 0.063 & \textbf{0.062} & 5539 & 1081 & 8 & 9 \\
  Breast \tiny{(45827)} & 0.039 &  0.039 & 0.016  & \textbf{0.015} & 9024 & 136 & 8 & 9 \\
  Colorectal \tiny{(44076)} & 0.027 & 0.027 & \textbf{0.012} & 0.014 & 12041 & 1940 & 6 & 8 \\
  Leukemia \tiny{(28497)} & 0.031 & 0.031 & \textbf{0.021} & 0.023 & 32705 & 2102 & 8 & 10  \\
  Renal \tiny{(53757)}& 0.018 & 0.018 & \textbf{0.013} & 0.014 & 6650 & 713 & 7 & 9 \\
  \hline
  MNIST & 7592 & 6126 & 3148 & \textbf{3.135} & 1006 & 68 & 42 & 43 \\
  Fashion-MNIST & 6153 & 4421 & 2.472 &\textbf{2.470} & 526 & 8 & 41 & 41 \\
  CIFAR10 & 3880 & 3564 & 2870 & \textbf{2.842} & 2968 & 201 & 43 & 44\\
\thickhline
\end{tabular}
\end{table*}

\begin{table*}[ht]
\scriptsize
\centering
\renewcommand{\arraystretch}{1.4}
\caption{Original Evaluation Results on All Datasets. Metrics include STRESS and average geometric distortion}
\vspace{0.5em}
\label{tab:evaluation-1-all}
\begin{tabular}{ccccccccc} 
\thickhline 
  Dataset     & \multicolumn{4}{c}{STRESS}   & \multicolumn{4}{c}{Average Geometric Distortion}  \\ 
  \hline
 & cMDS & L-MDS & N-MDS  & N-MDS${^+}$ & cMDS & L-MDS & N-MDS &  N-MDS${^+}$\\
  Random-simplex & 28.376 & 17.760 & 3.433 & \textbf{1.392} & 1.049 & 1.049 & 1.010 & \textbf{1.004}  \\
  Euclidean-ball & 19.229 & 13.154 & \textbf{3.346} & 3.676 & 1.046 & 1.041 & \textbf{1.013} & 1.017 \\
  \hline
  Brain \tiny{(50161)} & 0.538 & 0.170 & 0.068 & \textbf{0.067} &  8.160 & 42.705 & \textbf{5.809} & 6.941\\
  Breast \tiny{(45827)} & 0.168 &  0.065 & \textbf{0.017}  & \textbf{0.017} & 6.988 & 31.081 & \textbf{6.205} & 6.295\\
  Colorectal \tiny{(44076)} & 0.121 & 0.047 & \textbf{0.013} & 0.016 & 23.938 & 34.587 & \textbf{20.234} & 22.475\\
  Leukemia \tiny{(28497)}& 0.172 & 0.079 & \textbf{0.028} & 0.031 & \textbf{6.551} & 32.214 & 7.032 & 6.749\\
  Renal \tiny{(53757)}& 0.070 & 0.030 & \textbf{0.016} & 0.019 & 21.709 & 38.282 & \textbf{19.680} & 21.223\\
  \hline
  MNIST & 25.516 & 6.156 & 3.152 & \textbf{3.144} & 1.119 & 1.104 & 1.064 & \textbf{1.063}\\
  Fashion-MNIST & 18.771 & 4.422 & 2.475 & \textbf{2.473} & 1.135 & 1.096 & \textbf{1.068} & \textbf{1.068}\\
  CIFAR10 & 16.309 & 3.572 & 2.930 & \textbf{2.916} & 1.129 & \textbf{1.109} & 1.121 & 1.118\\
\thickhline
\end{tabular}
\end{table*}

\begin{table*}[ht]
\tiny
\centering
\renewcommand{\arraystretch}{1.4}
\caption{Evaluation Results on Image Datasets with other perturbation metrics. }
\vspace{0.5em}
\label{tab:image-other-metric}
\begin{tabular}{ccccccccc} 
\thickhline 
   Dataset    & \multicolumn{4}{c}{ STRESS} & \multicolumn{2}{c}{\# Neg distances}   & \multicolumn{2}{c}{\# Neg $\lambda$ Selected}  \\ 
    & cMDS & L-MDS & N-MDS  & N-MDS${^+}$ & N-MDS &  N-MDS${^+}$ & N-MDS &  N-MDS${^+}$\\
  \hline
  MNIST $(k=10)$ & 10080 & 1979 & \textbf{1827} & 1830 & 554 & 63 & 42 & 41 \\
  Fashion  $(k=10)$ & 10249 & 2112 & 1936 &\textbf{1915} & 403 & 84 & 38 & 40 \\
  CIFAR10 $(k=10)$ & 9032 & \textbf{1881} & 2309 & 2329 & 3236 & 996 & 39 & 37\\ 
 \hline
  MNIST (noise) & $1.497\mathrm{e}{9}$ & $1.499\mathrm{e}{9}$ & $1.497\mathrm{e}{9}$ & $1.140\mathrm{e}{9}$ & 0 & 0 & 0 & 7 \\
  Fashion  (noise) & $4.815\mathrm{e}{9}$ & $4.130\mathrm{e}{9}$ & $3.382\mathrm{e}{9}$ &$3.327\mathrm{e}{9}$ & 0 & 0 & 2 & 14\\
  CIFAR10 (noise) & $2.303\mathrm{e}{10}$ & $2.116\mathrm{e}{9}$ & $2.303\mathrm{e}{9}$ & $1.848\mathrm{e}{9}$ & 0 & 0 & 0 & 9\\
  \hline
  MNIST (missing) &$3.281\mathrm{e}{8}$ & $1.010\mathrm{e}{8}$ & $3.281\mathrm{e}{8}$ & $1.010\mathrm{e}{8}$ & 0 & 0 & 0 & 0 \\
  Fashion (missing) & $5.165\mathrm{e}{8}$ & $1.726\mathrm{e}{8}$ &$5.165\mathrm{e}{8}$ & $1.726\mathrm{e}{8}$ & 0 & 0 & 0 & 0 \\
  CIFAR10 (missing) & $2.132\mathrm{e}{9}$ &$5.756\mathrm{e}{8}$ & $2.132\mathrm{e}{9}$ & $5.756\mathrm{e}{8}$ & 0 & 0 & 0 & 0\\
\thickhline
\end{tabular}
\end{table*}

\begin{figure}[h!]
    \centering
    \includegraphics[width=0.98\textwidth]{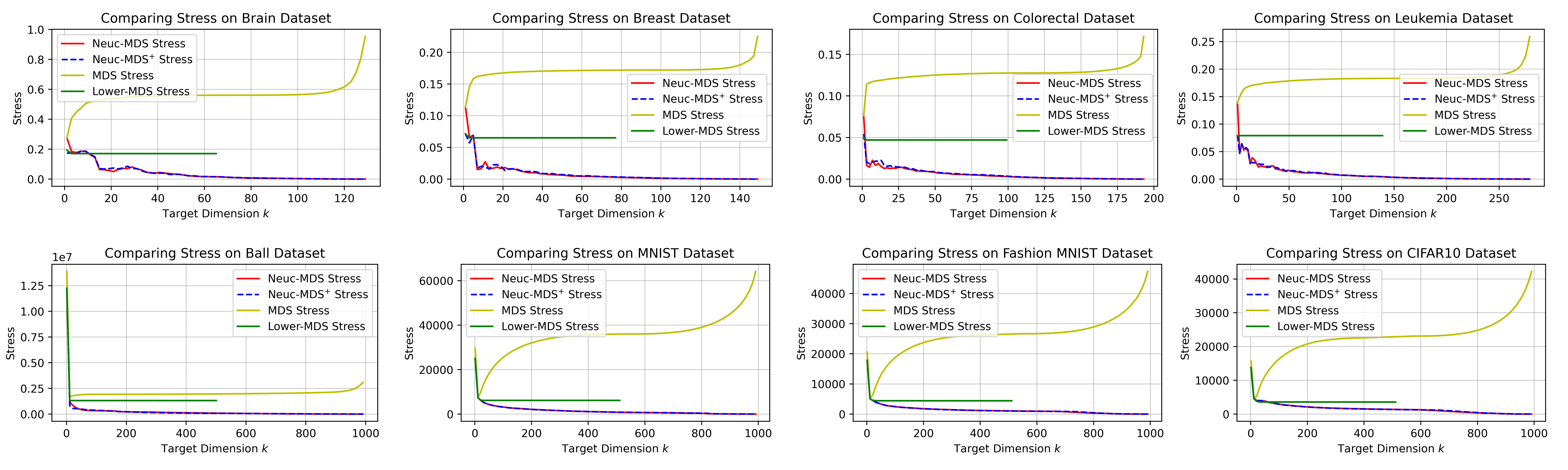}
    \caption{\footnotesize Dimensionality Paradox on Eight Datasets.}
    \label{fig:dim-para-all}
\end{figure}

On scaled additive error, our proposed methods still consistently outperform cMDS and Lower-MDS considerably. Thus, we have shown that on all metrics our methods yield more favorable results. It is interesting to observe that Neuc-MDS$^+$ produces much fewer negative distances than Neuc-MDS, though the number of negative eigenvalues selected are quite close. Further, two methods have similar performance on STRESS and other metrics. Therefore in practice, if there is a considerable concern on negative distances, Neuc-MDS$^+$ should become a better choice.

Until now all results reported on images are using $k$NN metric with $k=2$ to produce non-Euclidean dissimilarities. There could be other perturbations for the same purpose, as mentioned in \Cref{subsec:eva-pert-metric}. We further test with $k=10$ and the other two metrics: adding noise and random removal on the STRESS. We add noise sampled from Gaussian distribution with variance as the maximum instance-wise difference scaled by 500; For random removal, we skip 50 images for each. The results are shown in \Cref{tab:image-other-metric}.

\subsection{More Results on Dimensionality Paradox}
\label{subsec:more-paradox}

The purpose of this section is to show the consistent performance of Neuc-MDS and Neuc-MDS$^+$ on all datasets, with respect to mitigating the dimensionality paradox issue. Therefore, in addition to \Cref{fig:dim-paradox}, we provide the same plots for other datasets, and the parameters all follow the main setup. \Cref{fig:dim-para-all} gives a clear illustration that Neuc-MDS and Neuc-MDS$^+$ always have lower STRESS than cMDS and Lower-MDS as dimension grows.

\end{document}